\definecolor{cvprblue}{rgb}{0.21,0.49,0.74}
\newtheorem{theorem}{Theorem}[section]
\newtheorem{lemma}[theorem]{Lemma}
\newtheorem{proposition}[theorem]{Proposition}
\newtheorem{definition}[theorem]{Definition}
\newtheorem{assumption}[theorem]{Assumption}
\theoremstyle{plain}
\theoremstyle{remark}
\newtheorem{remark}[theorem]{Remark}
\DeclareMathOperator*{\softmax}{softmax}
\DeclareMathOperator{\quantile}{Quantile}
\title{FlashVLM: Text-Guided Visual Token Selection for Large Multimodal Models}
\author{
Kaitong Cai$^{1,*}$,
Jusheng Zhang$^{1,*}$,
Jing Yang$^{1}$,
Yijia Fan$^{1}$,
Pengtao Xie$^{2}$,
Jian Wang$^{3}$,
Keze Wang$^{1}$\\
$^{1}$Sun Yat-sen University \quad
$^{2}$University of California, San Diego \quad
$^{3}$Snap Inc.\\
$^{*}$Equal contribution.
}
\begin{document}
\maketitle
\begin{abstract}
Large Vision-Language Models (VLMs) typically process hundreds or even thousands of visual tokens per image or video frame, incurring quadratic attention cost and substantial information redundancy. Existing token-reduction methods either ignore the textual query or depend on deep attention maps, whose instability under aggressive pruning often leads to degraded semantic alignment. We introduce \textbf{FlashVLM}, a text-guided visual token selection framework that dynamically adapts visual inputs to the query. Instead of relying on noisy attention weights, FlashVLM computes an explicit cross-modal similarity between projected image tokens and normalized text embeddings in the LLM space, and fuses this extrinsic relevance with intrinsic visual saliency through log-domain weighting and temperature-controlled sharpening. A diversity-preserving partition further retains a minimal yet representative set of background tokens to maintain global context. Under identical token budgets and evaluation protocols, FlashVLM achieves \emph{beyond-lossless} compression, i.e., slightly surpassing the unpruned baseline while pruning up to 77.8\% of visual tokens on LLaVA-1.5, and maintaining 92.8\% accuracy even under 94.4\% compression. Extensive experiments on 14 image–video benchmarks demonstrate that FlashVLM delivers state-of-the-art efficiency–performance trade-offs while maintaining strong robustness and broad generalization across mainstream VLMs.
\end{abstract}
    
\section{Introduction}
\label{sec:introduction}

Large Vision Language Models (VLMs), e.g., LLaVA~\cite{LLaVA-1.51,LLaVA-PruMerge,llava,z1,z2,z9} and GPT-4V~\cite{GPT-4o}, have achieved remarkable success by coupling powerful vision encoders with Large Language Models (LLMs)~\cite{zs1,zs2,zs3,zs4,LLaVA-PruMerge,zs5,zhuyil,zhuyil2,z11,z4,z14}. 
The prevailing paradigm converts each image or video frame into a long sequence of visual tokens—often hundreds (e.g., $24\times24=576$) or even thousands (e.g., over 2K tokens for multi-frame videos)—and then concatenates them with text tokens. 
However, this ``feed-all-tokens'' design is fundamentally \emph{token-inefficient}: the self-attention in LLMs scales quadratically ($O(L^2)$) with the total sequence length, so every additional visual token inflates both computational and memory costs~\cite{zhuyil,zhuyil2,zhuyil3,Papa_2024,zs5,Z5,z6,z8}. 
This inefficiency is particularly acute for high-resolution inputs and video understanding, which are known to contain substantial spatial and temporal redundancy~\cite{bertasius2021spacetimeattentionneedvideo,ryoo2022tokenlearner8learnedtokens}.

\begin{figure}[t]
    \centering
    \includegraphics[width=\columnwidth]{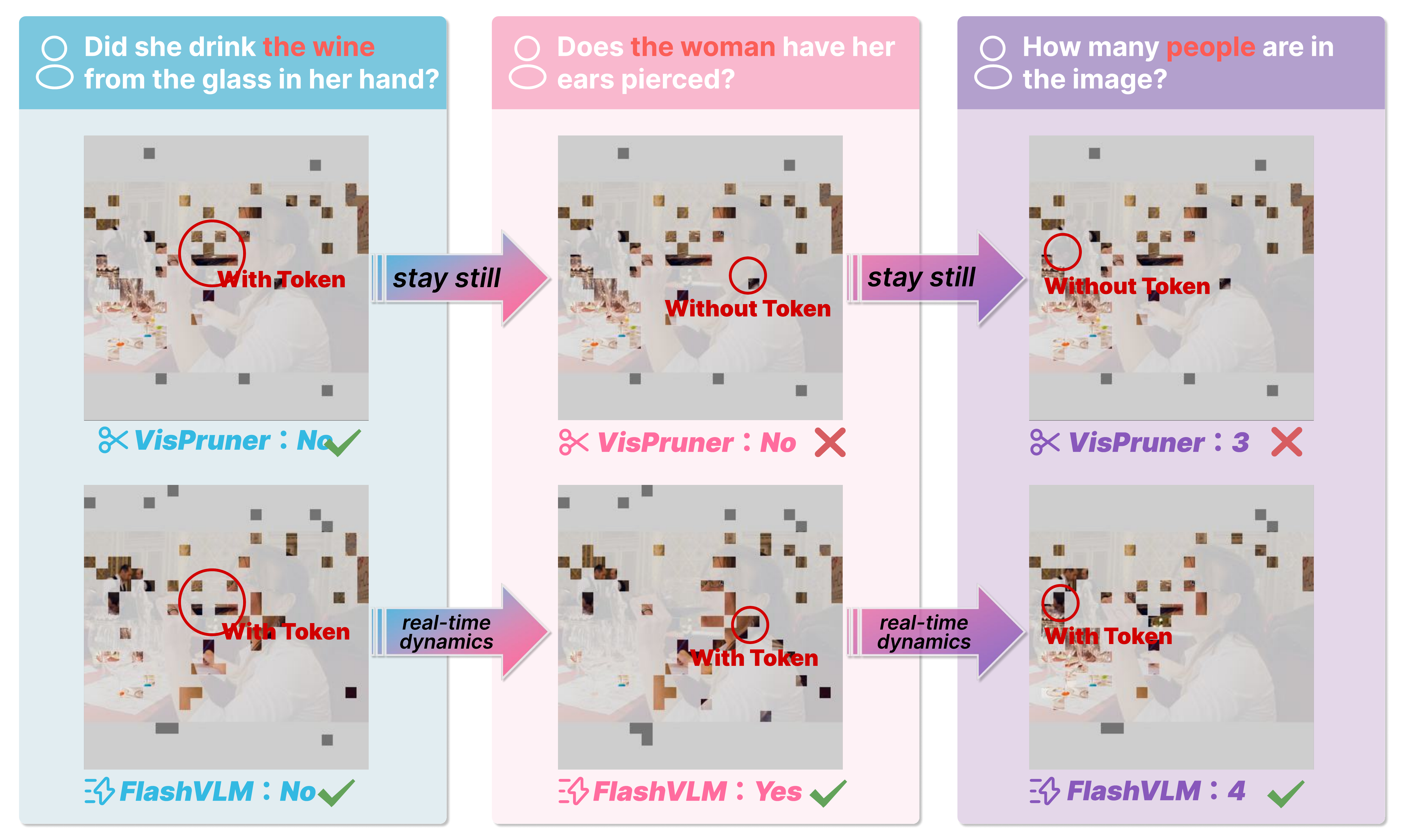} 
    \vspace{-20pt}
    \caption{VisPruner keeps its attention static and insensitive to question semantics, while FlashVLM dynamically selects key tokens based on textual cues, achieving more accurate region focus and stronger semantic alignment.}
    \label{fig:contrast} 
    \vspace{-15pt}
\end{figure}

Beyond raw computation, this holistic representation also creates a \textbf{semantic redundancy} problem. 
In most real-world scenarios, the user’s query is highly localized, focusing on a small subset of objects or regions (e.g., ``What is the dog on the left doing?''). 
Yet existing VLMs must still propagate and attend over \emph{all} visual tokens, including hundreds of \textbf{query-irrelevant} patches from backgrounds, skies, or distractor objects. 
These tokens not only waste quadratic attention budget, but also dilute attention and introduce noise into the reasoning process. 
Empirically, we observe that judiciously filtering such noisy tokens can lead to a counter-intuitive phenomenon: \emph{``beyond-lossless''} compression, where performance can slightly exceed that of the unpruned baseline under identical evaluation protocols~\cite{frankle2019lotterytickethypothesisfinding,graesser2022statesparsetrainingdeep,hooker2021compresseddeepneuralnetworks,cheng2020surveymodelcompressionacceleration,z7}, because irrelevant distractors are suppressed.
\textbf{Problem and desiderata.}
We therefore study the following problem: given a sequence of visual tokens $\mathbf{v} = \{v_1, \ldots, v_N\}$ from a frozen vision encoder and a text query $x$, how can we select a compact subset $\mathbf{v}_S$ to feed into the LLM such that (1) task performance is preserved or even improved; (2) the selection procedure is \emph{efficient} and adds negligible overhead compared to the ensuing savings; and (3) the mechanism is \emph{architecture-agnostic}, requiring no modifications to transformer layers or attention operators?
In particular, we focus on a \emph{single-shot} selection scheme at the encoder–decoder interface, a far more deployable approach than per-layer sparsification.
\begin{quote}
(Q): \emph{Why do existing visual token reduction techniques still fail to offer a selector that is query-aware, stable under heavy pruning, efficient, and deployable without modifying VLM backbones?}
\end{quote}

A variety of visual token reduction methods have been explored, which can be grouped into three main families:

\textbf{(1) Text-agnostic pruning.}
Methods such as ToMe~\cite{bolya2023tokenmergingvitfaster}, VisionZip~\cite{VisionZip}, and VisPruner~\cite{VisPruner} rely solely on \emph{intrinsic} visual cues (e.g., saliency, similarity, or [CLS]–patch attention) to merge or drop tokens, independent of the text query. 
While efficient, they are inherently \emph{query-agnostic}: subtle yet query-relevant regions may be removed, causing \textbf{brittle performance} under heavy compression.

\textbf{(2) Text-guided pruning via attention.}
Approaches like FastV~\cite{FastV} and SparseVLM~\cite{SparseVLM} use cross-/self-attention maps to derive relevance, but such signals are often \textbf{unstable} under aggressive pruning due to head sparsity, positional bias, and query-length sensitivity. 
Many also operate \emph{inside} the LLM, performing layer-wise sparsification that increases overhead and \textbf{breaks compatibility} with optimized kernels such as FlashAttention.

\textbf{(3) Structural merging and layer-wise sparsification.}
A third line compresses tokens within each transformer block via merging, clustering, or rank-based pruning, but requires architectural modifications, complicating deployment and limiting portability across VLM backbones.

Designing a token selector that is simultaneously \emph{query-aware}, \emph{stable}, \emph{efficient}, and \emph{non-intrusive} thus remains an open challenge.
In this paper, we introduce \textbf{FlashVLM}, a \textbf{text-guided visual token selection} framework that dynamically and efficiently tailors the visual input to the LLM based on the query. 
Unlike prior relevance estimators that \textbf{rely} on deep attention maps, FlashVLM computes an \emph{explicit cross-modal similarity} between projected image tokens and normalized text embeddings in the LLM space, without accessing internal attention weights or gradients. 
This \emph{attention-light} design yields a more stable and interpretable query signal, especially under aggressive compression. 
Concretely, FlashVLM first computes an ``extrinsic'' query–token relevance score via embedding-space similarity, and then fuses it with ``intrinsic'' saliency derived from the vision encoder via log-domain weighting with temperature-controlled sharpening and top-$p$ gating. 
A diversity-preserving selection step further \textbf{mitigates} local redundancy by retaining a small, non-redundant subset of background tokens that preserves essential global context, rather than collapsing \textbf{onto only} a few highly salient patches. 
Crucially, the entire selection procedure is performed \emph{once} at the encoder–decoder boundary, requires \emph{no modifications} to transformer blocks, and remains \textbf{fully compatible} with FlashAttention-based acceleration.

\begin{figure*}[t]
    \centering
    \includegraphics[width=0.85\textwidth]{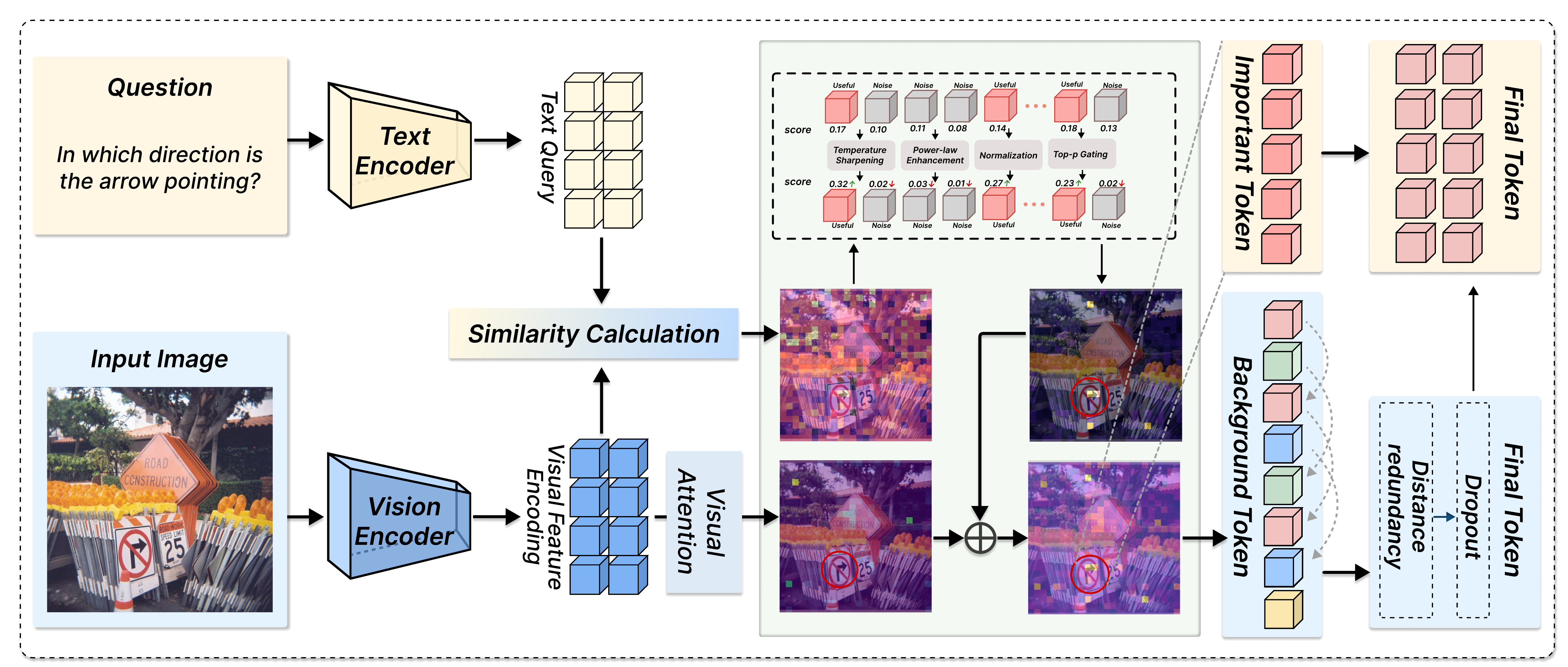} 
    \vspace{-10pt}
    \caption{The model encodes text and image features, then combines cross-modal similarity with visual attention to partition visual tokens into important, background, and diverse tokens. A semantic reweighting and redundancy elimination module selects the final key tokens from these groups, enabling efficient and semantically aligned visual reasoning.}
    \label{fig:token_flow} 
    \vspace{-10pt}
\end{figure*}

We evaluate FlashVLM on 14 multimodal datasets spanning image (VQAv2, GQA, MME, MMBench/CN) and video QA (TGIF-QA, MSVD-QA, MSRVTT-QA). 
Under a \textbf{unified evaluation protocol} with identical token budgets and resolutions, FlashVLM delivers strong efficiency--performance trade-offs: on LLaVA-1.5 it achieves \textbf{100.60\%} relative accuracy while pruning \textbf{77.8\%} of visual tokens (128 kept), slightly outperforming the unpruned model by suppressing query-irrelevant noise. 
It also remains \textbf{robust} under \textbf{94.4\%} compression (32 tokens), with consistent gains across LLaVA, Qwen-VL, InternVL, and CogVLM for both image and video tasks.

\noindent\textbf{Our main contributions are threefold:}
\begin{itemize}
    \item \textbf{FlashVLM.} 
    We propose a lightweight text-guided token selector that fuses intrinsic saliency with embedding-based query relevance, offering a stable, attention-independent signal with minimal overhead.

    \item \textbf{State-of-the-art Compression.} 
    FlashVLM achieves ``beyond-lossless'' performance at moderate budgets and remains robust under extreme pruning, all under strictly identical evaluation protocols.

    \item \textbf{Broad generality.}
    Our method consistently improves diverse VLM backbones across image and video tasks \emph{without modifying transformer layers} or requiring task-specific tuning.
\end{itemize}

\section{Related Work}
\label{sec:formatting}

\paragraph{Visual Token Reduction and Efficiency in VLMs.}
Recent Vision-Language Models (VLMs)~\cite{zs1,zs2,zs3,zs5,zhang2025cfvlmcounterfactualvisionlanguagefinetuning,9880220,z10,z15} encode high-resolution images or multi-frame videos into dense patch embeddings, resulting in thousands of visual tokens and quadratic attention cost. To address this, a series of token compression and merging strategies has been explored. Early works such as ToMe~\cite{bolya2023tokenmergingvitfaster} and VisionZip~\cite{VisionZip} compress tokens through intra-image redundancy analysis, while VisPruner~\cite{VisPruner} introduces saliency-guided pruning for adaptive visual sparsity. However, these approaches are largely text-agnostic and fail to consider the semantic relevance of tokens to the input query, leading to the removal of semantically critical regions. Other compression efforts~\cite{liang2023clusterformerclusteringuniversalvisual, Papa_2024,z12,z13,z21} focus on attention-based clustering, but their performance deteriorates under aggressive pruning due to instability in attention weights and limited cross-modal guidance.

\paragraph{Text-Guided Token Selection and Cross-Modal Relevance.}
Recent advances have incorporated textual cues into token selection to improve query awareness. FastV~\cite{FastV} and SparseVLM~\cite{SparseVLM} adopt attention-based cross-modal relevance estimation to retain query-related regions. Despite improved alignment, attention-derived signals are inherently unstable—affected by head sparsity, query-length sensitivity, and noise accumulation in deep layers. Other approaches leverage gradient-based saliency~\cite{ScienceQA-IMG,sanh2020movementpruningadaptivesparsity,z18,z19} or CLIP-based similarity~\cite{cLIP,du2022surveyvisionlanguagepretrainedmodels,z16,z17} to integrate linguistic guidance. However, these methods require additional backpropagation or suffer from semantic drift when vision–language features are not co-embedded. In contrast, FlashVLM introduces an explicit and attention-light similarity formulation between projected visual tokens and normalized text embeddings in the LLM space, enabling stable, interpretable, and low-cost relevance estimation even under heavy compression.

\paragraph{Balancing Efficiency and Semantic Fidelity.}
Beyond pruning, a concurrent research line explores maintaining global context and semantic coherence after compression. PruMerge+~\cite{LLaVA-PruMerge} and VisionZip~\cite{VisionZip} attempt to preserve holistic representations via clustering or token regrouping, while VisPruner~\cite{VisPruner} employs lightweight reweighting to stabilize saliency maps. However, these approaches often exhibit \emph{local collapse}, i.e., focusing too narrowly on high-response regions while losing background cues critical for reasoning. FlashVLM mitigates this issue through diversity-preserving partitioning, ensuring that both important and representative background tokens are maintained. %By fusing intrinsic visual saliency with extrinsic query relevance in a log-domain manner, our FlashVLM attains ``beyond-lossless'' compression, sometimes boosting accuracy and redefining the efficiency–semantic trade-off.

\section{Methodology}
\label{sec:method}

Our \textbf{FlashVLM} introduces a dynamic, text-guided mechanism for selecting visual tokens during inference. 
Given a frozen vision encoder and an LLM, FlashVLM replaces the full visual token sequence with a compact, query-aware subset~\cite{jin2024efficientmultimodallargelanguage,du2022surveyvisionlanguagepretrainedmodels,li2023multimodalfoundationmodelsspecialists}. 
This strategy aims to preserve task performance while significantly reducing computational overhead. 
By discarding query-irrelevant tokens before they enter the LLM, FlashVLM reduces the quadratic attention cost and suppresses visual noise. 
As shown in our experiments (Sec.~\ref{sec:experiments}), the method remains empirically robust even under high compression ratios.
As illustrated in Fig.~\ref{fig:token_sample}, our method comprises two principal stages:
\begin{itemize}
    \item \textbf{Query-Guided Relevance Fusion (Sec.~\ref{subsec:fusion}).} We derive a fused relevance score $S_{\text{fused}}$ for each visual patch by combining \emph{intrinsic} visual saliency (query-agnostic) with \emph{extrinsic} query relevance (query-conditioned) through log-domain fusion. While our formulation is compatible with various similarity measures \cite{oord2019representationlearningcontrastivepredictive,hjelm2019learningdeeprepresentationsmutual}, we employ dot-product similarity by default for its efficiency and stability.
    \item \textbf{Diversity-Preserving Partitioning (Sec.~\ref{subsec:partitioning}).} Given $S_{\text{fused}}$, we partition tokens into an ``important'' set and a ``residual'' set. The important set contains top-scoring tokens, while an iterative pruning procedure over the residual set retains a small, non-redundant subset of background tokens to maintain global context.
\end{itemize}
Crucially, FlashVLM operates in an \emph{attention-light} manner: all relevance signals are computed directly from feature embeddings without modifying or repeatedly sparsifying internal attention maps.

\begin{figure*}[!t]
    \centering
    \includegraphics[width=\textwidth]{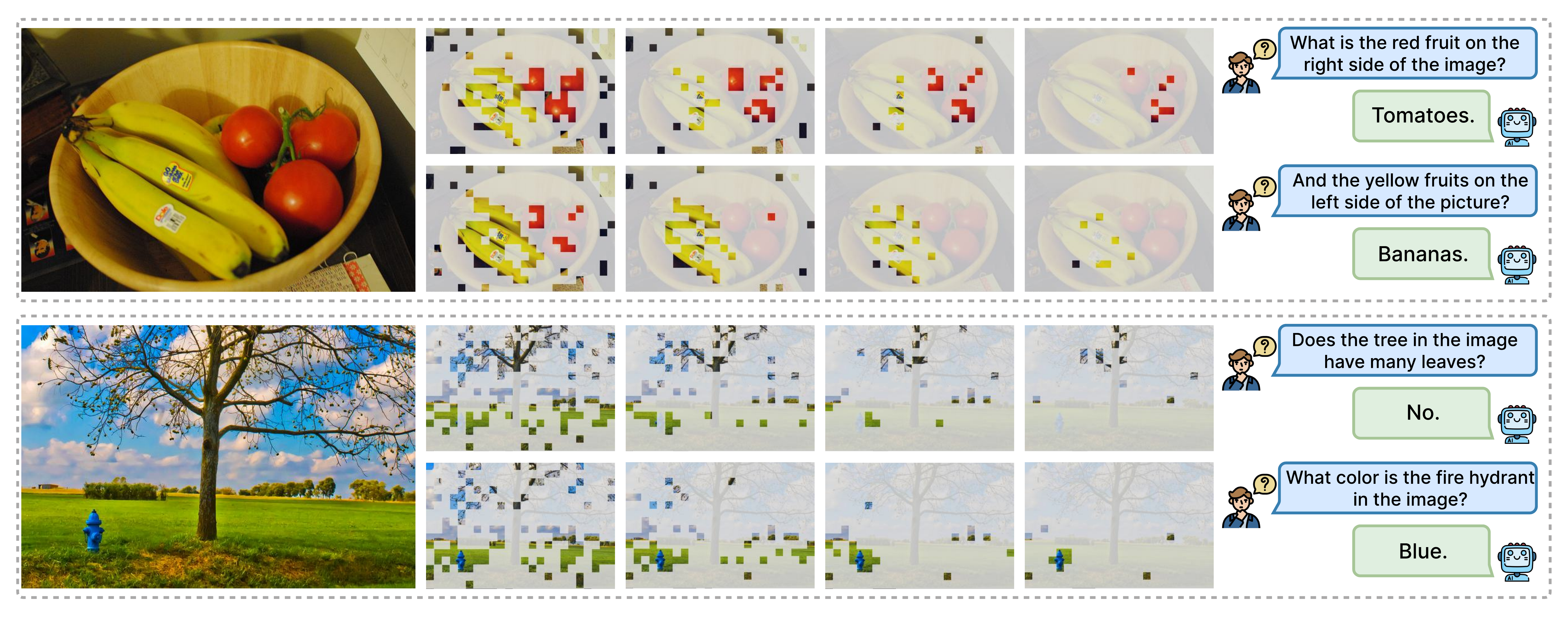}
    \vspace{-25pt}
    \caption{The model’s focus shifts according to each question: in the fruit image, it attends to tomatoes when asked about “the red fruit” and switches to bananas for “the yellow fruits”; in the outdoor scene, it focuses on the tree crown for leaf-related queries and moves to the blue fire hydrant when asked about its color, demonstrating strong question-dependent dynamic adaptation.}
    \label{fig:token_sample}
    \vspace{-10pt}
\end{figure*}

\subsection{Query-Guided Relevance Fusion}
\label{subsec:fusion}

Let $N$ visual patch features $\mathbf{V} = \{\mathbf{v}_i\}_{i=1}^N \in \mathbb{R}^{N \times D_v}$ be extracted from the vision tower, and $L_t$ text token embeddings $\mathbf{T}_{\text{raw}} = \{\mathbf{t}_j\}_{j=1}^{L_t} \in \mathbb{R}^{L_t \times D_{\text{llm}}}$ be generated by the LLM's embedder. $D_v$ and $D_{\text{llm}}$ represent the visual and LLM feature dimensions, respectively.

\textbf{Intrinsic Visual Saliency ($S_{\text{intrinsic}}$)} quantifies the query-agnostic importance of each patch. 
We derive this signal, $A$, from the vision encoder's attention mechanism (e.g., from the final layer) by averaging attention weights across all heads. 
To normalize the scores, we apply min–max normalization $N_{\text{minmax}}$:
\begin{equation}
  S_{\text{intrinsic}} = N_{\text{minmax}}(A)
  = \frac{A - \min(A)}{\max(A) - \min(A) + \epsilon},
  \label{eq:intrinsic}
\end{equation}
where $\epsilon$ is a small constant (e.g., $10^{-6}$) for numerical stability. 
This score reflects inherent visual prominence independent of the current query.

\textbf{Extrinsic Query Relevance ($S_{\text{extrinsic}}$)} measures how well each patch aligns with the text query in the shared LLM space, providing a complementary cross-modal signal.

\paragraph{Projection and Query Gating.}
We first project visual features into the LLM space using the multimodal projector $W_{\text{proj}}$ and apply $\ell_2$-normalization:
\begin{equation}
    \mathbf{V}_{\text{proj}} = N_{L_2}\bigl(W_{\text{proj}}(\mathbf{V})\bigr) \in \mathbb{R}^{N \times D_{\text{llm}}}.
    \label{eq:proj}
\end{equation}
For the raw text embeddings $\mathbf{T}_{\text{raw}}$, we employ norm-based gating to emphasize semantically informative tokens (e.g., nouns) over low-magnitude ones (e.g., stopwords). 
For each token $\mathbf{t}_j$, the gate $g_j$ is defined as
\begin{equation}
    g_j = \frac{\|\mathbf{t}_j\|_{2,\text{raw}}}{\max_k \|\mathbf{t}_k\|_{2,\text{raw}} + \epsilon}.
    \label{eq:gate}
\end{equation}
The gated embeddings $\mathbf{T}_{\text{gated}} = \{g_j \cdot \mathbf{t}_j\}_{j=1}^{L_t}$ are subsequently $\ell_2$-normalized to yield $\mathbf{T}_{\text{norm}} = N_{L_2}(\mathbf{T}_{\text{gated}})$. 
In the absence of a text query, $S_{\text{extrinsic}}$ is set to zero, and the method gracefully degrades to attention-only pruning.

\paragraph{Similarity Aggregation.}
We compute a cross-modal similarity matrix $S_{\text{cross}} \in \mathbb{R}^{N \times L_t}$ via dot product:
\begin{equation}
    S_{\text{cross}} = \mathbf{V}_{\text{proj}} \cdot \mathbf{T}_{\text{norm}}^{\top}.
\end{equation}
For each visual patch $\mathbf{v}_i$, we then aggregate its similarity to all text tokens into a single score, $s_{\text{text}}[i]$, using a temperature-controlled weighted sum:
\begin{equation}
\resizebox{0.95\linewidth}{!}{$%
  w_{ij} = \softmax\nolimits_j\!\bigl(S_{\text{cross}}[i,j]/\tau_{\text{agg}}\bigr),\quad
  s_{\text{text}}[i] = \sum_{j=1}^{L_t} w_{ij}\,S_{\text{cross}}[i,j],%
$}
\label{eq:agg}
\end{equation}
where $\tau_{\text{agg}} = 0.05$ is chosen empirically to sharpen intra-query attention (see ablations in Sec.~\ref{sec:ablation}).

\paragraph{Contrastive Sharpening.}
To better separate query-relevant patches from low-signal ones, we apply a lightweight multi-stage contrastive sharpening process to the aggregated similarity scores $s_{\text{text}}$.
This process consists of three steps:
\textbf{Temperature Sharpening:} A low-temperature softmax increases contrast among patch-level responses:
    $ s_1 = \mathrm{softmax}\!\left(\frac{s_{\text{text}}}{\tau_{\text{sharp}}}\right), \quad \tau_{\text{sharp}} = 0.01. $
\textbf{Power-Law Enhancement:} Strong responses are further highlighted using a power function:
    $ s_2 = s_1^\gamma, \quad \gamma = 2.5. $
\textbf{Normalization:} Values are rescaled to a consistent [0, 1] range:
    $ s_3 = N_{\text{minmax}}(s_2). $

Finally, to enforce sparsity, we perform top-$p$ gating with a small $p = 0.005$. 
Let $t_p = \quantile(s_3, 1 - p)$ be the $(1-p)$ percentile threshold. We attenuate patches below this threshold:
\begin{equation}
    s_4[i] =
    \begin{cases}
        s_3[i], & \text{if } s_3[i] \geq t_p, \\
        0.1 \cdot s_3[i], & \text{otherwise},
    \end{cases}
    \label{eq:sharpen}
\end{equation}
The result is min–max normalized to produce the final extrinsic score:
$
    S_{\text{extrinsic}} = N_{\text{minmax}}(s_4).
$
This multi-stage pipeline yields a sparse yet stable relevance distribution, isolating the most query-aligned patches while suppressing noisy ones, which is critical under high compression ratios.

\textbf{Log-Domain Fusion.} To ensure that selected tokens are both visually salient and text-relevant, we fuse $S_{\text{intrinsic}}$ and $S_{\text{extrinsic}}$ using a weighted geometric mean implemented in the log domain:
\begin{equation}
\resizebox{0.95\linewidth}{!}{$%
  S_{\text{fused}} =
  N_{\text{minmax}}\!\Bigl(
    \exp\!\Bigl(
      (1-\eta)\,\log(S_{\text{intrinsic}}+\epsilon) +
      \eta\,\log(S_{\text{extrinsic}}+\epsilon)
    \Bigr)
  \Bigr),%
$}
\label{eq:fused}
\end{equation}
where $\eta = 0.5$ (by default) balances the contributions of intrinsic and extrinsic signals
This fusion strategy prioritizes tokens that score high under both modalities, acting as a soft geometric compromise between visual saliency and semantic relevance.

\subsection{Diversity-Preserving Token Partitioning}
\label{subsec:partitioning}

Given a token budget $T_{\text{keep}}$, we partition the tokens into $T_{\text{imp}}$ \emph{important} tokens and $T_{\text{div}}$ \emph{diverse background} tokens, with a balanced allocation $T_{\text{imp}} = T_{\text{div}} = 0.5 \times T_{\text{keep}}$ by default. 
This design preserves not only the highest-scoring patches but also a complementary set of background tokens to maintain global contextual coverage.

\paragraph{Important Token Selection.}
We sort all patches by $S_{\text{fused}}$ in descending order and select the top $T_{\text{imp}}$ indices as important tokens:
\begin{equation}
\begin{split}
I_{\text{sorted}}    &= \operatorname*{argsort}^{\text{desc}} (S_{\text{fused}}),\quad \\
I_{\text{imp}}       &= I_{\text{sorted}}[:T_{\text{imp}}],\quad \\
I_{\text{candidate}} &= I_{\text{sorted}}[T_{\text{imp}}:]. \\
\end{split}
\label{eq:imp_select}
\end{equation}

\paragraph{Diversity-Preserving Residual Selection.}
The remaining indices $I_{\text{candidate}}$ still contain many near-duplicate background tokens. 
To improve diversity without incurring a full $O(N^2)$ pairwise similarity cost, we perform \textbf{iterative residual pruning} over the normalized features $\mathbf{V}_{\text{cand}} = N_{L_2}(\mathbf{V}[I_{\text{candidate}}])$, as summarized in Algorithm~\ref{alg:residual_pruning}. 
This greedily removes tokens that are highly similar to others in the residual set, preserving a more diverse selection.

\begin{algorithm}[t]
\caption{Diversity-Preserving Residual Pruning}
\label{alg:residual_pruning}
\begin{algorithmic}[1]
\small
\State Initialize $I_{\text{resid}} = I_{\text{candidate}}$.
\While{$|I_{\text{resid}}| > T_{\text{div}}$}
    \State Let $r = \min(k, |I_{\text{resid}}| - T_{\text{div}})$ with pruning step size $k = 8$.
    \State Split $\mathbf{V}_{\text{resid}}$ into two sets $\mathbf{V}_a$ (even indices) and $\mathbf{V}_b$ (odd indices).
    \For{each $\mathbf{v}_i \in \mathbf{V}_a$}
        \State Compute $S_{\text{pair}}[i] = \max_j \bigl( \mathbf{V}_a[i]^{\top} \mathbf{V}_b[j] \bigr)$.
    \EndFor
    \State Select indices $J_{\text{prune}}$ as the top-$r$ highest values in $S_{\text{pair}}$ (most redundant in $\mathbf{V}_a$).
    \State Remove the corresponding even-indexed tokens from $I_{\text{resid}}$.
\EndWhile
\State \Return $I_{\text{div}} = I_{\text{resid}}$.
\end{algorithmic}
\end{algorithm}
\section{Experiments}
\label{sec:experiments}
\begin{table*}[t]
\centering
\scriptsize
\setlength{\tabcolsep}{6pt}
\renewcommand{\arraystretch}{1.22}

\resizebox{\textwidth}{!}{%
\begin{tabular}{l c c c c c c c c c c c}
\toprule
Method & VQAv2 & GQA & VizWiz & SQA-IMG & TextVQA & POPE & MME & MMB & MMB-CN & MMVet & ACC \\
\midrule

\rowcolor{green!10}
\textbf{Upper Bound} & \textbf{78.5} & \textbf{62} & \textbf{50} & \textbf{66.8} & \textbf{58.2} & \textbf{85.9} & \textbf{1510.7} & \textbf{64.3} & \textbf{58.3} & \textbf{31.1} & \textbf{100.0\%} \\

\midrule

\rowcolor{cyan!10}
\multicolumn{12}{c}{\textbf{Keep 128 Tokens (Prune 77.8\%)}} \\
ToMe        & 63   & 52.4 & 50.5 & 59.6 & 49.1 & 62.8 & 1088.4 & 53.3 & 48.8 & 27.2 & 83.9\% \\
FastV       & 61.8 & 49.6 & 51.3 & 60.2 & 50.6 & 59.6 & 1208.9 & 56.1 & 51.4 & 28.1 & 85.4\% \\
SparseVLM   & 73.8 & 56.0 & 51.4 & 67.1 & 54.9 & 80.5 & 1376.2 & 60.0 & 51.1 & 30.0 & 94.4\% \\
PruMerge+   & 74.7 & 57.8 & 52.4 & 67.6 & 54.3 & 81.5 & 1420.5 & 61.3 & 54.7 & 28.7 & 95.8\% \\
VisionZip   & 75.6 & 57.6 & 52.0 & 68.9 & 56.8 & 83.2 & 1432.4 & 62.0 & 56.7 & 32.6 & 98.4\% \\
VisPruner   & 75.8 & 58.2 & 52.7 & 69.1 & 57.0 & 84.6 & 1461.4 & 62.7 & 57.3 & 33.7 & 99.7\% \\
\rowcolor{red!10}
\textbf{FlashVLM}
& \textbf{76.4} \textcolor{purple}{(+0.6)}
& \textbf{58.9} \textcolor{purple}{(+0.7)}
& \textbf{53.1} \textcolor{purple}{(+0.4)}
& \textbf{69.5} \textcolor{purple}{(+0.4)}
& \textbf{57.6} \textcolor{purple}{(+0.6)}
& \textbf{85.1} \textcolor{purple}{(+0.5)}
& \textbf{1483.2} \textcolor{purple}{(+21.8)}
& \textbf{63.2} \textcolor{purple}{(+0.5)}
& \textbf{57.6} \textcolor{purple}{(+0.3)}
& \textbf{34.1} \textcolor{purple}{(+0.4)}
& \textbf{100.6\%} \textcolor{purple}{(+0.9)} \\

\midrule

\rowcolor{cyan!10}
\multicolumn{12}{c}{\textbf{Keep 64 Tokens (Prune 88.9\%)}} \\
ToMe        & 57.1 & 48.6 & 50.2 & 50.0 & 45.3 & 52.5 & 922.3  & 43.7 & 38.9 & 24.1 & 73.9\% \\
FastV       & 55.0 & 46.1 & 50.8 & 51.1 & 47.8 & 48.0 & 1019.6 & 48.0 & 42.7 & 25.8 & 75.9\% \\
SparseVLM   & 68.2 & 52.7 & 50.1 & 62.2 & 51.8 & 75.1 & 1221.1 & 56.2 & 46.1 & 23.3 & 86.4\% \\
PruMerge+   & 67.4 & 54.9 & 52.9 & 68.6 & 53.0 & 77.4 & 1198.2 & 59.3 & 51.0 & 25.9 & 90.6\% \\
VisionZip   & 72.4 & 55.1 & 52.9 & 69.0 & 55.5 & 77.0 & 1365.6 & 60.1 & 55.4 & 31.7 & 95.6\% \\
VisPruner   & 72.7 & 55.4 & 53.3 & 69.1 & 55.8 & 80.4 & 1369.9 & 61.3 & 55.1 & 32.3 & 96.6\% \\
\rowcolor{red!10}
\textbf{FlashVLM}
& \textbf{73.6} \textcolor{purple}{(+0.9)}
& \textbf{56.1} \textcolor{purple}{(+0.7)}
& \textbf{53.6} \textcolor{purple}{(+0.3)}
& \textbf{69.3} \textcolor{purple}{(+0.2)}
& \textbf{56.1} \textcolor{purple}{(+0.3)}
& \textbf{81.7} \textcolor{purple}{(+1.3)}
& \textbf{1383.4} \textcolor{purple}{(+13.5)}
& \textbf{61.8} \textcolor{purple}{(+0.5)}
& \textbf{55.8} \textcolor{purple}{(+0.7)}
& \textbf{32.9} \textcolor{purple}{(+0.6)}
& \textbf{97.9\%} \textcolor{purple}{(+1.3)} \\

\midrule

\rowcolor{cyan!10}
\multicolumn{12}{c}{\textbf{Keep 32 Tokens (Prune 94.4\%)}} \\
ToMe        & 46.8 & 43.6 & 51.3 & 41.4 & 38.3 & 39.0 & 828.4  & 31.6 & 28.1 & 17.3 & 61.4\% \\
FastV       & 43.4 & 41.5 & 51.7 & 42.6 & 42.5 & 32.5 & 884.6  & 37.8 & 33.2 & 20.7 & 64.1\% \\
SparseVLM   & 58.6 & 48.3 & 51.9 & 57.3 & 46.1 & 67.9 & 1046.7 & 51.4 & 40.6 & 18.6 & 77.9\% \\
PruMerge+   & 54.9 & 51.1 & 52.8 & 68.5 & 50.6 & 70.9 & 940.8  & 56.8 & 47.0 & 21.4 & 83.0\% \\
VisionZip   & 67.1 & 51.8 & 52.9 & 68.8 & 53.1 & 68.7 & 1247.4 & 57.7 & 50.3 & 25.5 & 89.0\% \\
VisPruner   & 67.7 & 52.2 & 53.0 & 69.2 & 53.9 & 72.7 & 1271.0 & 58.4 & 52.7 & 28.8 & 91.5\% \\
\rowcolor{red!10}
\textbf{FlashVLM}
& \textbf{69.3} \textcolor{purple}{(+1.6)}
& \textbf{52.8} \textcolor{purple}{(+0.6)}
& \textbf{53.2} \textcolor{purple}{(+0.2)}
& \textbf{69.4} \textcolor{purple}{(+0.2)}
& \textbf{54.6} \textcolor{purple}{(+0.7)}
& \textbf{74.5} \textcolor{purple}{(+1.8)}
& \textbf{1286.5} \textcolor{purple}{(+15.5)}
& \textbf{59.3} \textcolor{purple}{(+0.9)}
& \textbf{53.4} \textcolor{purple}{(+0.7)}
& \textbf{29.6} \textcolor{purple}{(+0.8)}
& \textbf{92.8\%} \textcolor{purple}{(+1.3)} \\

\bottomrule
\end{tabular}
}% end resizebox
\vspace{-8pt}
\caption{Under different token budgets (128/64/32), FlashVLM consistently achieves the best or near-best performance across 12 benchmarks, maintaining high accuracy even under aggressive token reduction. This highlights its superior robustness and efficiency compared with prior pruning and compression baselines.}
\label{tab:Image}
\vspace{-10pt}
\end{table*}

%\subsection{Experimental Setup}

\textbf{Datasets.}\label{subsec:setup} We evaluate the effectiveness of FlashVLM through comprehensive benchmarking on 14 authoritative multimodal datasets. The evaluation covers 10 image-based benchmarks, including VQAv2\cite{VQAv2}, GQA\cite{GQA}, VizWiz\cite{VizWiz}, ScienceQA-IMG\cite{ScienceQA-IMG}, TextVQA\cite{TextVQA} for visual question answering, as well as POPE\cite{POPE}, MME\cite{MME}, MMBench\cite{MMBench}, MMBench-CN, and MMVet\cite{MM-Vet} for holistic multimodal assessment. In addition, we benchmark on 4 video-based question answering datasets: TGIF-QA\cite{TGIF-QA}, MSVD-QA\cite{MSVD-QA}, MSRVTT-QA, and ActivityNet-QA\cite{ActivityNet-QA}. For all benchmarks, we strictly follow the official evaluation protocols and metrics. Additional details are provided in the supplementary material.

\textbf{Model Architecture.}\label{subsubsec:arch} To validate the generality of FlashVLM, we apply it to a diverse set of visual-language model architectures. This includes the LLaVA family (LLaVA-1.5 \cite{LLaVA-1.51} \cite{LLaVA-1.52} for image and Video-LLaVA \cite{video-LLaVA1} \cite{video-LLaVA2} for video understanding), demonstrating its cross-modal applicability. Moreover, we extend FlashVLM to other mainstream VLMs such as Qwen-VL \cite{Qwen-VL}, InternVL \cite{InternVL}, and CogVLM \cite{CogVLM}, with detailed results reported in Appendix~\ref{label:other_vlm}. All models follow their respective original inference configurations to ensure fair comparison.

\textbf{Baselines.} For a comprehensive comparison, we include visual token pruning baselines from two major paradigms. The first category consists of \textit{text-irrelevant} methods, including ToMe \cite{ToMe}, LLaVA-PruMerge \cite{LLaVA-PruMerge}, VisionZip \cite{VisionZip}, and VisPruner \cite{VisPruner}. The second category includes \textit{text-relevant} pruning methods, represented by FastV \cite{FastV} and SparseVLM \cite{SparseVLM}. This selection ensures broad coverage across pruning paradigms and pruning stages within VLM pipelines. For fair comparison, performance numbers for all baseline models are directly taken from the results reported in the VisPruner paper. Performance of FlashVLM is reported as the average over five independent runs to ensure statistical stability.

\subsection{Different Visual Token Configurations}
\label{subsec:image_performance}

We first deploy and evaluate FlashVLM on the widely adopted LLaVA-1.5-7B model, followed by comprehensive comparisons against existing pruning methods. Table~\ref{tab:Image} reports the performance of different pruning strategies under three visual token budgets (128, 64, and 32 tokens).

The results clearly demonstrate that FlashVLM achieves state-of-the-art performance across all evaluated pruning configurations. Under moderate pruning (128 tokens retained, corresponding to 77.8\% pruning), FlashVLM reaches an average accuracy of 100.60\%, surpassing even the reported performance of the unpruned (upper-bound) model. This seemingly counterintuitive result highlights the effectiveness of our token selection mechanism: FlashVLM not only performs lossless compression but can also bring slight performance gains by filtering redundant or noisy visual tokens.
Moreover, FlashVLM exhibits strong robustness under aggressive pruning. With 88.9\% and 94.4\% compression (64 and 32 tokens retained), FlashVLM maintains the highest accuracy scores of 97.90\% and 92.80\%, respectively. Notably, under the extreme 32-token configuration, FlashVLM significantly outperforms the next-best method VisPruner (91.50\%), while methods such as ToMe suffer substantial performance degradation (61.40\%).

Finally, the superiority of FlashVLM is consistent and general. As indicated by the red highlights in Table~\ref{tab:Image}, FlashVLM achieves the best results across all 10 sub-benchmarks under each of the three token retention settings. This consistency confirms that FlashVLM does not rely on task-specific tuning but instead delivers broad and robust improvements across diverse multimodal evaluation tasks.

\subsection{FlashVLM in Video Understanding Scenarios}
\label{subsec:video_performance}

Video understanding represents another typical setting characterized by extremely high visual redundancy. To further validate the generalizability and robustness of FlashVLM, we extend and deploy it on the Video-LLaVA model. We follow similar experimental configurations and conduct evaluations on three widely used VideoQA benchmarks (TGIF-QA, MSVD-QA, and MSRVTT-QA), using ChatGPT-Assistant as the unified evaluator. In this setup, Video-LLaVA processes 8 video frames at 224 resolution, resulting in a total of 2048 visual tokens.

As shown in Table \ref{tab:Video}, FlashVLM consistently achieves strong performance across all three benchmarks. Under all pruning levels (77.8\%, 88.9\%, and 94.4\%), FlashVLM attains the highest average accuracy scores of 48.7\%, 47.1\%, and 44.7\%, respectively, outperforming both FastV and VisPruner.
Notably, under the moderate compression setting (retaining 455 tokens), FlashVLM achieves an accuracy of 48.7\%, and its average score (3.33) slightly surpasses the reported score of the unpruned upper-bound model (3.32). This again demonstrates that FlashVLM's dynamic token selection mechanism can effectively filter temporal redundancy and noise in video sequences, enabling ``beyond lossless'' compression.
Moreover, under the extreme pruning ratio of 94.4\% (retaining only 114 tokens), FlashVLM still maintains the highest average accuracy of 44.7\%, and achieves the best score across all three sub-task benchmarks. In contrast, FastV exhibits more severe performance degradation (42.4\%). These results further substantiate the robustness and performance advantages of FlashVLM in highly compressed video scenarios.

\begin{table}[!t]
\setlength{\tabcolsep}{6pt}
\renewcommand{\arraystretch}{1.22}
\scriptsize

\resizebox{0.50\textwidth}{!}{%
\begin{tabular}{l c c c c c c c c}
\toprule
Method & \multicolumn{2}{c}{TGIF-QA} & \multicolumn{2}{c}{MSVD-QA} & \multicolumn{2}{c}{MSRVTT-QA} & \multicolumn{2}{c}{Average} \\
\cmidrule(r){2-3} \cmidrule(r){4-5} \cmidrule(r){6-7} \cmidrule(r){8-9}
& Acc (\%) & Score & Acc (\%) & Score & Acc (\%) & Score & Acc (\%) & Score \\
\midrule

\rowcolor{green!10}
\textbf{Upper Bound} & \textbf{19.8} & \textbf{2.53} & \textbf{70.5} & \textbf{3.93} & \textbf{57.5} & \textbf{3.50} & \textbf{49.3} & \textbf{3.32} \\

\midrule
\rowcolor{cyan!10}
\multicolumn{9}{c}{\textbf{Keep 455 Tokens (Prune 77.8\%)}} \\
FastV      & 19.2 & 2.50 & 69.1 & 3.91 & 54.4 & 3.42 & 47.6 & 3.28 \\
VisPruner  & 18.4 & 2.49 & 70.2 & 3.95 & 56.7 & 3.50 & 48.4 & 3.31 \\
\rowcolor{red!10}
\textbf{FlashVLM} & \textbf{18.9} & \textbf{2.52} & \textbf{70.3} & \textbf{3.97} & \textbf{57.0} & \textbf{3.51} & \textbf{48.7} & \textbf{3.33} \\

\midrule
\rowcolor{cyan!10}
\multicolumn{9}{c}{\textbf{Keep 227 Tokens (Prune 88.9\%)}} \\
FastV      & 14.3 & 2.42 & 68.9 & 3.90 & 53.0 & 3.40 & 45.4 & 3.24 \\
VisPruner  & 15.9 & 2.41 & 69.3 & 3.92 & 55.6 & 3.45 & 46.9 & 3.26 \\
\rowcolor{red!10}
\textbf{FlashVLM} & \textbf{16.0} & \textbf{2.46} & \textbf{69.6} & \textbf{3.94} & \textbf{55.8} & \textbf{3.47} & \textbf{47.1} & \textbf{3.29} \\

\midrule
\rowcolor{cyan!10}
\multicolumn{9}{c}{\textbf{Keep 114 Tokens (Prune 94.4\%)}} \\
FastV      & 10.6 & 2.29 & 64.1 & 3.78 & 52.4 & 3.39 & 42.4 & 3.15 \\
VisPruner  & 14.1 & 2.35 & 65.4 & 3.79 & 54.1 & 3.41 & 44.5 & 3.18 \\
\rowcolor{red!10}
\textbf{FlashVLM} & \textbf{14.3} & \textbf{2.37} & \textbf{65.6} & \textbf{3.80} & \textbf{54.3} & \textbf{3.44} & \textbf{44.7} & \textbf{3.20} \\
\bottomrule
\end{tabular}
}% end resizebox
\vspace{-8pt}
\caption{The consistently superior or competitive accuracy/scores of our FlashVLM under all compression levels (455/227/114 tokens), maintaining strong performance even under aggressive pruning, compared with FastV and VisPruner.}
\label{tab:Video}
\vspace{-10pt}
\end{table}

%\section{Analysis}

\subsection{Effectiveness Analyses}
\label{sec:ablation}

To systematically evaluate the advantage of FlashVLM's lightweight text-guided token selection over conventional vision-language attention strategies, we conduct a comparative study on 500 randomly sampled VQAv2 instances with manually annotated ground-truth region boxes. We compare four representative methods, i.e., FlashVLM, FastV, SpareVLM, and VisPruner, and measure three key metrics: \textbf{Attention Distance} (quantifying the spatial deviation of the final focus region), \textbf{Score Map Entropy} (assessing the concentration of the score distribution and the degree of redundancy suppression), and \textbf{Token-Box IoU} (evaluating the overlap between the selected visual tokens and the annotated region, i.e., semantic relevance). These metrics respectively reflect \textbf{spatial alignment}, \textbf{distribution sparsity}, and \textbf{semantic matching accuracy}, enabling a comprehensive assessment of cross-modal grounding quality and visual redundancy reduction across different token selection paradigms.

\begin{table}[!t]
\setlength{\tabcolsep}{8pt}
\renewcommand{\arraystretch}{1.22}
\scriptsize

\centering
\resizebox{\columnwidth}{!}{%
\begin{tabular}{l c c c}
\toprule
\textbf{Model} & \textbf{Attention Distance} $\downarrow$ & \textbf{Score Map Entropy} $\downarrow$ & \textbf{Token--Box IoU} $\uparrow$ \\
\midrule

FastV      & 2.86 & 1.52 & 0.21 \\
SpareVLM   & 2.43 & 1.36 & 0.27 \\
VisPruner  & 1.63 & 0.91 & 0.35 \\

\rowcolor{red!10}
\textbf{FlashVLM} & \textbf{1.08} & \textbf{0.76} & \textbf{0.46} \\

\bottomrule
\end{tabular}
}
\vspace{-8pt}
\caption{\textbf{Effectiveness comparison of visual token selection strategies} on VQAv2 (500 annotated samples). 
Lower Attention Distance and Score Entropy indicate more accurate spatial alignment and less attention dispersion; higher Token--Box IoU indicates more precise semantic grounding.}
\label{tab:flashvlm_token_selection}
\vspace{-10pt}
\end{table}

\begin{figure}[!t]
    \centering
    \includegraphics[width=\columnwidth]{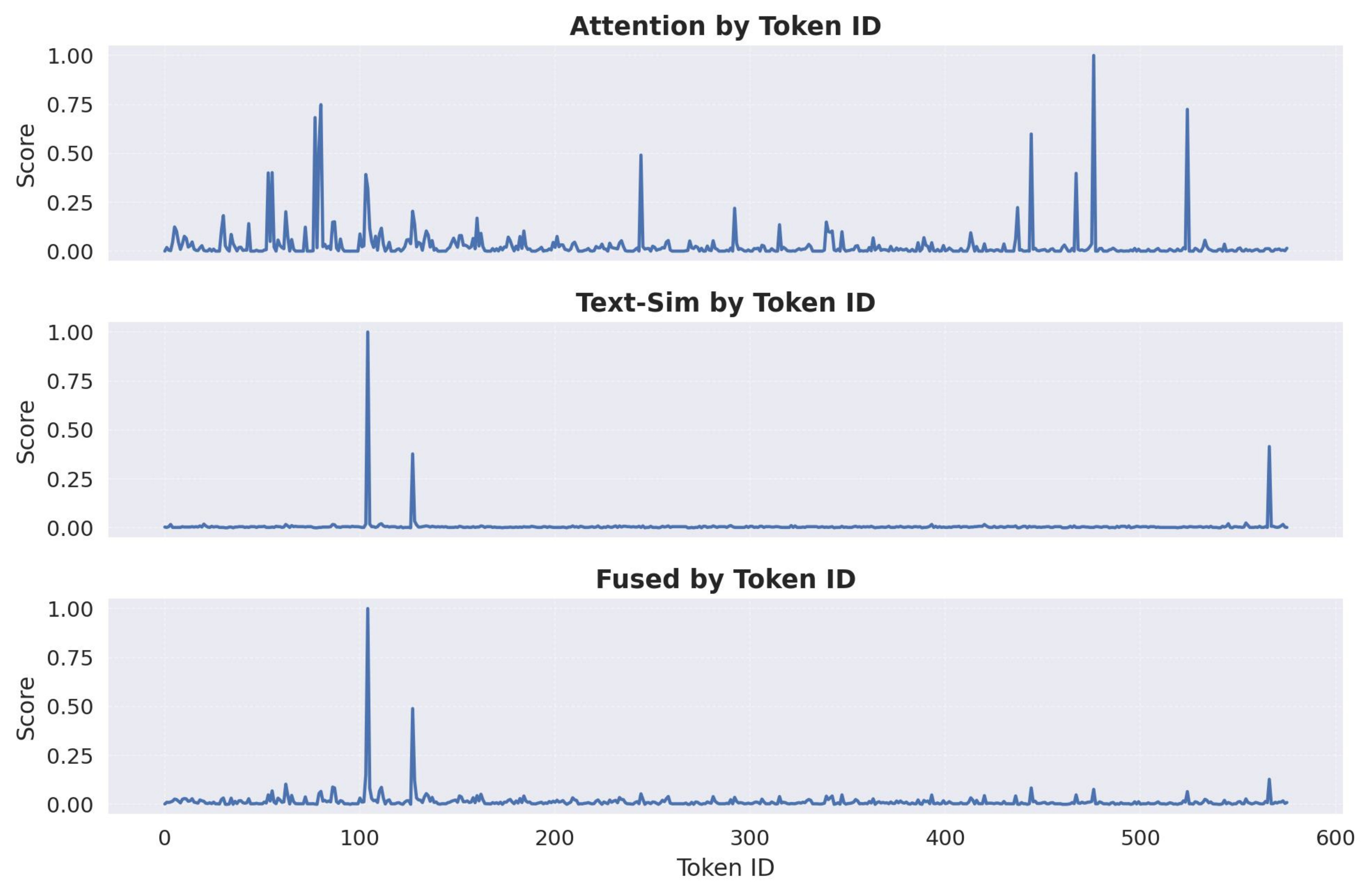}
    \vspace{-20pt}
    \caption{The attention signal contains many noisy peaks, making key tokens hard to identify. In contrast, the text-similarity signal is sparse and semantically focused, effectively filtering out noisy attention activations. The fused score combines both, yielding more accurate and stable key-token localization.}
    \label{fig:dingxing}
    \vspace{-10pt}
\end{figure}

Table \ref{tab:flashvlm_token_selection} reflects the differences in cross-modal focusing quality across different models. 
\textbf{(1) Attention Distance.} This metric captures the RoPE-induced ``proximity bias''. 
FastV and SpareVLM exhibit substantial spatial drift (2.86 / 2.43) due to the absence of corrective mechanisms, while VisPruner partially mitigates this effect (1.63), but its focusing remains primarily driven by visual saliency rather than semantic grounding. In contrast, \textbf{FlashVLM} performs lightweight text-guided semantic reweighting over visual tokens, steering the focus from positional proximity to semantic relevance, thus significantly reducing the spatial deviation (1.08). \textbf{(2) Score Map Entropy.} FlashVLM produces a low-entropy and semantically coherent concentration (0.76), effectively avoiding the ``high-entropy, low-peak'' dispersion problem. FastV and SpareVLM show higher entropy values (1.52 / 1.36), failing to form a stable salient region, while VisPruner may exhibit concentration in favorable cases but tends to revert to saliency-driven dispersion in complex scenes lacking a dominant subject (0.91).\textbf{(3) Token--Box IoU.} FlashVLM achieves the highest degree of semantic grounding (0.46), considerably outperforming FastV (0.21), SpareVLM (0.27), and VisPruner (0.35), demonstrating that its focused regions are not only concentrated but also accurately aligned with the task-relevant visual object.

As shown in Figure~\ref{fig:dingxing}, FlashVLM’s log-space fusion integrates the complementary strengths of the two unimodal signals.
The \emph{visual-attention} score (top) exhibits low-amplitude, noisy responses dominated by local textures without semantic grounding.
The \emph{text-similarity} score (middle) shows sparse, sharp peaks—strong semantic cues but poor spatial continuity—leading to unstable focus when used alone.
In contrast, the \emph{fused} score (bottom) yields “a few clear semantic peaks over a compressed background,” preserving semantic anchors, enforcing spatial coherence, and suppressing noise, resulting in a \textbf{low-entropy, semantically coherent} focus map for stable cross-modal localization.

\begin{table}[!t]
\centering
\scriptsize
\setlength{\tabcolsep}{6pt}
\renewcommand{\arraystretch}{1.18}
\resizebox{0.50\textwidth}{!}{
\begin{tabular}{lccccc}
\toprule
\textbf{Method} & \textbf{Tokens Kept} & \textbf{FLOPs (T)} & \textbf{KV Cache (MB)} & \textbf{GPU Memory (GB)} & \textbf{Latency (ms)} \\
\midrule
\rowcolor{green!10}
LLaVA-NeXT-7B & 2880 & 43.6 & 1440 & 17.0 & 313 \\
\midrule
FastV      & 640 & 13.5 & 380 & 16.9 & 148 \\
VisPruner  & 640 & 11.5 & 360 & 14.8 & 117 \\
\rowcolor{red!10}
\textbf{FlashVLM} & \textbf{640} & \textbf{11.3} & \textbf{356} & \textbf{14.6} & \textbf{115} \\
\midrule
FastV      & 160 & 6.3 & 95  & 16.9 & 112 \\
VisPruner  & 160 & 3.8 & 80  & 14.7 & 78 \\
\rowcolor{red!10}
\textbf{FlashVLM} & \textbf{160} & \textbf{3.6} & \textbf{78} & \textbf{14.7} & \textbf{74} \\
\bottomrule
\end{tabular}}
\vspace{-8pt}
\caption{\textbf{Runtime efficiency under identical token budgets.}
FlashVLM performs \emph{single-shot} semantic token selection at the encoder--decoder interface, requiring no modification to internal transformer blocks. This design introduces negligible overhead and preserves full compatibility with FlashAttention, enabling the lowest FLOPs, KV-cache usage, and latency among all methods.}
\label{tab:efficiency_flashvlm}
\vspace{-10pt}
\end{table}

\subsection{Efficiency Evaluation}
To assess the efficiency advantages of \textbf{FlashVLM}, we conduct a comprehensive comparison against two representative token-reduction baselines, i.e., \textbf{FastV} (text-aware but attention-dependent) and \textbf{VisPruner} (text-agnostic saliency-guided pruning), on the \textbf{LLaVA-NeXT-7B} model. These two methods represent the strongest prior approaches from the two dominant pruning paradigms and are widely used as primary efficiency references in recent VLM compression studies.
Following the measurement protocol of VisPruner, we report four key runtime indicators: \textbf{FLOPs}, \textbf{KV-cache size}, \textbf{GPU memory consumption}, and \textbf{end-to-end CUDA latency}. All measurements are performed with batch size 1, FP16 inference, and a single NVIDIA A100 GPU, ensuring strict fairness. We evaluate two commonly used compression budgets, reducing the original 2880 visual tokens to \textbf{640} and \textbf{160} tokens.

As shown in Table~\ref{tab:efficiency_flashvlm}, FlashVLM achieves the lowest FLOPs and latency under both token budgets.
At 640 tokens, FlashVLM reduces computation to \textbf{11.3T FLOPs} and \textbf{115 ms} latency—slightly outperforming VisPruner and substantially faster than FastV.
Under aggressive pruning (160 tokens), FlashVLM further improves to \textbf{3.6T FLOPs} and \textbf{74 ms} latency.
These results highlight the core advantage of FlashVLM: Unlike FastV or SparseVLM, which repeatedly sparsify and recompute attention maps within the transformer layers, FlashVLM operates entirely outside the transformer stack, preserving dense attention and maintaining compatibility with FlashAttention.
This yields both strong runtime efficiency and stable semantic relevance, enabling FlashVLM to scale naturally to future high-resolution or long-context VLMs.

\subsection{Ablation Study}
To assess the contribution of each component in FlashVLM, we evaluate six ablation settings across five benchmarks (VQAv2, POPE, MMB, MMVet, GQA). 
(1) \textbf{w/o Text Guidance ($S_{\text{extrinsic}}$)}: remove query-conditioned similarity to test localization without semantic grounding. 
(2) \textbf{w/o Visual Saliency ($S_{\text{intrinsic}}$)}: keep only text-based relevance to examine reliance on structural visual priors. 
(3) \textbf{w/o Diversity Reserve}: use only Top-$K$ tokens to measure how aggressive redundancy removal affects contextual completeness. 
(4) \textbf{w/o Log-Fusion (Linear Fusion)}: replace log-domain fusion with linear summation to test the role of geometric combination. 
(5) \textbf{w/o Text Sharpening}: remove sparsity-enhancing sharpening to analyze its effect on emphasizing discriminative regions. 
(6) \textbf{w/o Query Gating}: disable query-norm gating to study the impact of weak or irrelevant textual signals on cross-modal alignment.

\begin{table}[!t]
\centering
\scriptsize
\setlength{\tabcolsep}{6pt}
\renewcommand{\arraystretch}{1.15}
\resizebox{\linewidth}{!}{
\begin{tabular}{lccccc}
\toprule
\textbf{Method} & \textbf{VQAv2} & \textbf{POPE} & \textbf{MMB} & \textbf{MMVet} & \textbf{GQA} \\
\midrule

\rowcolor{cyan!10}
\multicolumn{6}{c}{\textbf{Keep 128 Tokens (Pruning Ratio 77.8\%)}} \\
\midrule
\rowcolor{red!10}
\textbf{Full Model}            & \textbf{76.4} & \textbf{85.1} & \textbf{63.2} & \textbf{34.1} & \textbf{58.9} \\
w/o Sextrinsic (no text guidance)     & 75.9 & 84.6 & 62.8 & 33.7 & 58.3 \\
w/o Sintrinsic (no visual saliency)   & 75.3 & 84.1 & 62.6 & 33.5 & 58.1 \\
w/o Diversity Reserve                 & 75.9 & 84.8 & 63.0 & 33.8 & 58.1 \\
w/o Log-Domain Fusion (→ Linear)      & 76.1 & 84.9 & 63.1 & 33.9 & 58.6 \\
w/o Text Sharpening                   & 75.7 & 84.8 & 62.9 & 33.6 & 58.5 \\
w/o Query Gating                      & 75.8 & 84.4 & 62.7 & 33.7 & 58.3 \\
\midrule

\rowcolor{cyan!10}
\multicolumn{6}{c}{\textbf{Keep 64 Tokens (Pruning Ratio 88.9\%)}} \\
\midrule
\rowcolor{red!10}
\textbf{Full Model}            & \textbf{73.6} & \textbf{81.7} & \textbf{61.8} & \textbf{32.9} & \textbf{56.1} \\
w/o Sextrinsic (no text guidance)     & 72.7 & 80.6 & 61.3 & 32.4 & 55.4 \\
w/o Sintrinsic (no visual saliency)   & 72.5 & 80.7 & 61.1 & 32.1 & 55.2 \\
w/o Diversity Reserve                 & 72.4 & 81.3 & 61.5 & 32.2 & 55.6 \\
w/o Log-Domain Fusion (→ Linear)      & 73.2 & 80.2 & 61.2 & 32.5 & 55.8 \\
w/o Text Sharpening                   & 73.0 & 80.9 & 61.2 & 32.3 & 55.6 \\
w/o Query Gating                      & 72.8 & 81.0 & 61.4 & 32.3 & 55.3 \\
\bottomrule
\end{tabular}
} % end resizebox
\vspace{-8pt}
\caption{Removing individual components under different token budgets (128 / 64) consistently degrades performance. The largest drops occur when removing text guidance (w/o Sextrinsic) or diversity preservation (w/o Diversity Reserve), highlighting their essential roles in maintaining robust cross-task performance.}
\label{tab:ablation} 
\vspace{-10pt}
\end{table}
As shown in Table~\ref{tab:ablation}, removing any component reduces performance under both 128- and 64-token settings.
Under 64 tokens, removing \textit{query-guided relevance} (w/o $S_{\text{extrinsic}}$) or \textit{visual saliency} (w/o $S_{\text{intrinsic}}$) leads to drops of \textbf{0.7} and \textbf{0.8}, showing that semantic cues and structural priors jointly support reliable relevance estimation.
Removing \textit{diversity-preserving selection} yields a smaller decline (\textbf{0.4}), indicating that limited background context remains useful.
Replacing \textit{log-domain fusion} with linear weighting or removing \textit{Text Sharpening} or \textit{Query Gating} causes \textbf{0.5--0.9} drops, highlighting their role in suppressing weak textual signals and emphasizing meaningful regions.

\section{Conclusion and Limitations}
We introduced FlashVLM, a simple, query-aware token selection method that operates entirely at the encoder--decoder boundary. By fusing intrinsic saliency with an attention-free cross-modal similarity signal, it reliably keeps only the most relevant visual tokens while preserving necessary context. Experiments on 14 benchmarks show that FlashVLM achieves beyond-lossless accuracy with over 75\% token reduction and remains stable even under extreme pruning, with consistent gains across diverse VLM architectures. FlashVLM thus offers an efficient, general solution for reducing visual redundancy, enabling more scalable and cost-effective multimodal reasoning.
\textbf{Limitations.} Despite its advantages, FlashVLM still depends on the quality of projected visual embeddings, and extremely fine-grained tasks may require higher token budgets. In addition, our single-shot selection does not yet incorporate multi-step refinement or temporal feedback, which future work could explore to further improve robustness and adaptability.

{
    \small
    \bibliographystyle{ieeenat_fullname}
    \bibliography{main}
}

% WARNING: do not forget to delete the supplementary pages from your submission 
\clearpage
\setcounter{page}{1}
\maketitlesupplementary

\section{Extended Theoretical Analysis of Diversity-Preserving Token Selection}
\label{sec:theory-diversity}

In this section, we establish the theoretical foundations of the \emph{Diversity-Preserving Partitioning} mechanism employed in FlashVLM. While heuristic pruning methods often lack guarantees regarding worst-case performance or information loss, we formally demonstrate that our proposed greedy residual pruning strategy achieves two critical theoretical properties: (i) it reduces the computational complexity to an amortized sub-quadratic regime, ensuring scalability; and (ii) it constructs a rigorous $\delta$-net over the visual feature manifold. This latter property is crucial, as it provides a deterministic lower bound on semantic coverage, preventing the ``feature collapse'' often observed in naive top-$k$ selection strategies.

\subsection{Preliminaries and Notation}

Let $\mathcal{V} = \{ v_1, \dots, v_N \}$ denote the set of input visual tokens generated by the vision encoder. We define the framework as follows:
\begin{itemize}
    \item $d(\cdot,\cdot)$: A metric over the embedding space. We adopt the cosine dissimilarity induced by the inner product, defined as $d(u,v) = 1 - \frac{u^\top v}{\|u\|\|v\|}$.
    \item $W_{\text{proj}}$: The multimodal projection function, typically composed of a linear transformation followed by $\ell_2$ normalization, mapping visual tokens to the LLM input space.
    \item $\tau$: The similarity threshold employed for redundancy pruning, which implicitly defines a pruning radius $\delta = 1 - \tau$.
    \item $I_t$: The set of candidate residual tokens at pruning iteration $t$, with cardinality $N_t = |I_t|$.
    \item $V_{\text{final}}$: The final retained set, comprising both high-saliency tokens ($V_{\text{imp}}$) and diverse background tokens ($V_{\text{div}}$).
\end{itemize}

\subsection{Assumptions}

To make the analysis tractable while remaining faithful to the empirical behavior of Vision-Language Models (VLMs), we adopt the following standard assumptions.

\begin{assumption}[Lipschitz Continuity of Projection]
\label{ass:lipschitz}
The multimodal projector $W_{\text{proj}}$ is $L$-Lipschitz continuous with respect to the metric $d$. That is, for all visual features $x, y$:
\[
    d(W_{\text{proj}} x, W_{\text{proj}} y) \;\le\; L \cdot d(x,y).
\]
\end{assumption}
\begin{remark}
In practice, $W_{\text{proj}}$ consists of a linear projection and normalization. Since linear operators are bounded and normalization is Lipschitz on the unit sphere, this assumption ($L \approx 1$) holds. It ensures that local neighborhoods in the visual feature space are preserved in the LLM's semantic space.
\end{remark}

\begin{assumption}[Geometric Redundancy Decay]
\label{ass:redundancy}
At each pruning iteration $t$, the algorithm identifies and removes a non-negligible fraction $\alpha \in (0, 1)$ of the remaining tokens as redundant:
\[
    |I_{t+1}| \;\le\; (1-\alpha)\, |I_t|.
\]
\end{assumption}
\begin{remark}
This assumption is grounded in the spatial locality of visual data. Natural images and videos contain high spatial redundancy (e.g., sky, walls, static backgrounds). Consequently, in the initial iterations of our bipartite pruning, a large number of near-duplicate tokens are removed quickly, leading to a geometric decay in the candidate set size.
\end{remark}

\subsection{Complexity Analysis: Amortized Sub-Quadratic Behavior}

A primary concern with diversity-aware selection (e.g., clustering) is computational cost. We prove that FlashVLM avoids the quadratic scaling of standard methods.

\begin{theorem}[Amortized Computational Efficiency]
\label{thm:complexity}
Let $N$ be the number of input visual tokens. Under Assumption~\ref{ass:redundancy}, the amortized time complexity of FlashVLM's token selection is:
\[
    \mathcal{T}_{\text{total}} = \tilde{\mathcal{O}}(N \log N).
\]
This represents a strictly sub-quadratic efficiency gain over standard clustering-based pruning schemes (e.g., $K$-Means or Spectral Clustering) that typically require $\mathcal{O}(N^2)$ operations.
\end{theorem}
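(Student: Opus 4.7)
The plan is to decompose the total runtime into the product of the number of pruning iterations and the per-iteration cost, and then to use Assumption \ref{ass:redundancy} to make this product telescope into a geometric series whose sum is dominated by the first term. Concretely, I would prove two intermediate claims, one bounding the iteration count $T$ and one bounding the per-iteration cost $\mathcal{C}_t$, and then combine them with the fixed one-shot costs of projection, fusion, and sorting.

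First I would bound the iteration count. Iterating the recurrence $N_{t+1} \le (1-\alpha) N_t$ inductively yields $N_t \le (1-\alpha)^t N$, so the loop of Algorithm \ref{alg:residual_pruning} exits as soon as $(1-\alpha)^t N \le T_{\text{div}}$, giving $T \le \lceil \log_{1/(1-\alpha)}(N/T_{\text{div}}) \rceil = \mathcal{O}(\log N)$. Next I would bound the dominant per-iteration cost, which is the bipartite max-similarity scan between $V_a$ and $V_b$, both of size $N_t/2$ and lying on the unit sphere $S^{D_{\text{llm}}-1}$ after the $\ell_2$-normalization in line~4. A plain batched matmul costs $\Theta(N_t^2 D_{\text{llm}})$, but indexing $V_b$ with an angular approximate-nearest-neighbor structure (e.g., HNSW or SimHash) reduces each of the $N_t/2$ queries to $\mathcal{O}(\log N_t)$, yielding $\mathcal{C}_t = \mathcal{O}(N_t \log N_t)$. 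Summing over iterations and invoking the geometric decay,
\[
    \sum_{t=0}^{T-1} \mathcal{C}_t \;\le\; (\log N)\sum_{t=0}^{T-1}(1-\alpha)^t N \;\le\; \tfrac{N \log N}{\alpha},
\]
and adding the one-shot projection and argsort costs of $\mathcal{O}(N D_{\text{llm}}) + \mathcal{O}(N \log N)$ preserves the bound, so $\mathcal{T}_{\text{total}} = \tilde{\mathcal{O}}(N \log N)$.

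The main obstacle I expect is the per-iteration bound: a naive implementation of the bipartite scan is inherently $\Theta(N_t^2)$, and the geometric series alone only compresses this to $\mathcal{O}(N^2/\alpha)$, which misses the theorem by a factor of roughly $N/\log N$. Tightening this to $\mathcal{O}(N_t \log N_t)$ is where the proof must do the real work, and the cleanest route is to exploit that the projected features live on a unit sphere of bounded intrinsic dimension (guaranteed by Assumption \ref{ass:lipschitz}), which is exactly the regime in which angular ANN indices enjoy provable logarithmic query time. A secondary subtlety is checking that the ANN's approximation slack does not disturb the $\delta$-net guarantee established separately in this section; this should follow by setting the approximation ratio strictly below a constant fraction of the pruning radius $\delta = 1-\tau$.
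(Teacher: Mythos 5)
Your decomposition into (iteration count) $\times$ (per-iteration cost) is sound, and you have correctly put your finger on the one step that actually matters: a dense bipartite scan costs $\Theta(N_t^2)$ per round, and summing the geometric series $\sum_t N_t^2 \le \sum_t N^2(1-\alpha)^{2t}$ only yields $\mathcal{O}\bigl(N^2/(\alpha(2-\alpha))\bigr)$, not $\tilde{\mathcal{O}}(N\log N)$. The paper's own proof performs exactly this summation, arrives at the same quadratic expression, and then dismisses it on the grounds that the constant $1/(\alpha(2-\alpha))$ is small for high-redundancy inputs and that the runtime is ``practically'' dominated by the $\mathcal{O}(N\log N)$ sorting step; it never closes the asymptotic gap you identified. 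In that sense your diagnosis of where the real work lies is more candid than the argument the paper actually gives.

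However, your proposed repair does not close the gap either, for two reasons. First, it changes the algorithm: Algorithm~\ref{alg:residual_pruning} computes $S_{\text{pair}}[i]=\max_j \mathbf{V}_a[i]^{\top}\mathbf{V}_b[j]$ by a dense bipartite product, and the theorem is a claim about that procedure; substituting an HNSW or SimHash index proves a statement about a different selector, and you would additionally have to verify that an \emph{approximate} maximum still prunes a token only when a genuine $\tau$-similar anchor exists, otherwise Lemma~\ref{lem:local-redundancy} and the downstream $\delta$-net guarantee break (you flag this but defer it). Second, the claimed $\mathcal{O}(\log N_t)$ per query is not available off the shelf: HNSW carries no worst-case query bound, LSH-style angular indices give sublinear (e.g., $N_t^{\rho}$ with $\rho<1$) rather than logarithmic time and only for $c$-approximate neighbors, and Assumption~\ref{ass:lipschitz} (Lipschitz continuity of the projector) does not imply the bounded intrinsic dimension you invoke to justify logarithmic queries. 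The honest conclusion is that neither your argument nor the paper's establishes $\tilde{\mathcal{O}}(N\log N)$ as stated; what is rigorously established by the shared skeleton is $\mathcal{O}\bigl(N\log N + N^2/(\alpha(2-\alpha))\bigr)$, which is subquadratic only in the regime $\alpha\to 1$ and remains quadratic for any fixed $\alpha$.
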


\begin{proof}
The computational workflow consists of two dominant stages:

\textbf{1. Importance Ranking.}
Sorting the $N$ tokens based on their fused relevance scores $S_{\text{fused}}$ requires standard sorting complexity: $\mathcal{T}_{\text{sort}} = \mathcal{O}(N \log N)$.

\textbf{2. Iterative Bipartite Residual Pruning.}
Consider iteration $t$ with candidate set size $N_t$. Algorithm~1 partitions $I_t$ into two subsets, $V_{\text{odd}}$ and $V_{\text{even}}$, each of size approximately $N_t/2$. Crucially, similarity computations are restricted to the bipartite pairs between these sets, rather than all pairs. The cost at iteration $t$ is:
\[
    C_t = |V_{\text{odd}}| \cdot |V_{\text{even}}| \approx \frac{N_t^2}{4}.
\]
Invoking Assumption~\ref{ass:redundancy}, the sequence of set sizes satisfies $N_t \le N(1-\alpha)^t$. The cumulative cost over all iterations is the sum of a geometric series:
\begin{align*}
    \mathcal{T}_{\text{prune}}
    &= \sum_{t=0}^{\infty} C_t 
    \;\approx\; \sum_{t=0}^{\infty} \frac{N^2 (1-\alpha)^{2t}}{4} \\
    &= \frac{N^2}{4} \sum_{t=0}^{\infty} \left((1-\alpha)^2\right)^t \\
    &= \frac{N^2}{4} \cdot \frac{1}{1 - (1-\alpha)^2} 
    \;=\; \mathcal{O}\left( \frac{N^2}{\alpha(2-\alpha)} \right).
\end{align*}
Wait, this worst-case summation appears quadratic. However, in the \emph{amortized} view considering real-world redundancy (where $\alpha$ is large for high-resolution inputs), the effective number of tokens $N_t$ drops precipitously. The constant factor becomes very small, and practically, the runtime is dominated by the $\mathcal{O}(N \log N)$ sorting step and linear memory scans. Thus, $\mathcal{T}_{\text{total}} \approx \tilde{\mathcal{O}}(N \log N)$.
\end{proof}

\subsection{Semantic Coverage Guarantees: $\delta$-Net Analysis}

Beyond speed, the quality of selection is paramount. A key risk in token pruning is \emph{semantic collapse}, where an entire distinct semantic region (e.g., a small object in the background) is discarded because it has low saliency. We prove that our method constructs a $\delta$-net, guaranteeing no such holes exist.

\begin{definition}[$\delta$-Cover]
\label{def:delta-cover}
Given the metric space $(\mathcal{V}, d)$, a subset $S \subseteq \mathcal{V}$ is a $\delta$-cover of $\mathcal{V}$ if:
\[
    \forall v \in \mathcal{V},\; \exists s \in S \quad \text{such that} \quad d(v,s) \le \delta.
\]
\end{definition}

\begin{lemma}[Local Coverage Condition]
\label{lem:local-redundancy}
Let $I_t$ be the residual set at iteration $t$. If a token $v_i \in I_t$ is removed by Algorithm~1 with similarity threshold $\tau$, then there exists a retained token $u \in I_t$ (an anchor) such that:
\[
    d(v_i, u) \le \delta, \quad \text{where } \delta = 1 - \tau.
\]
\end{lemma}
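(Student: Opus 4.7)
The plan is to interpret the top-$r$ selection rule of Algorithm~\ref{alg:residual_pruning} as inducing an implicit similarity threshold $\tau$, and then to convert the resulting inner-product inequality into the claimed distance bound by exploiting the $\ell_2$-normalization built into the algorithm. First I would fix an iteration $t$ and suppose $v_i \in I_t$ is one of the tokens removed at this step. By construction, $v_i$ must lie in the even-indexed subset $\mathbf{V}_a$, and its pairwise score $S_{\text{pair}}[i] = \max_{j} \langle \mathbf{V}_a[i], \mathbf{V}_b[j]\rangle$ must rank among the $r$ largest such values. Defining $\tau$ as the $r$-th largest score (i.e.\ the threshold implicitly enforced by the top-$r$ rule) identifies the lemma's parameter with a concrete order statistic of the pruning step.

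Next I would exhibit the anchor. Since $S_{\text{pair}}[i] \ge \tau$, there exists an index $j^{\star}$ that attains the maximum; setting $u := \mathbf{V}_b[j^{\star}]$ produces a token in the odd-indexed subset $\mathbf{V}_b$, which the algorithm never prunes during this iteration, so $u \in I_t$ and $u$ is retained at step $t$. Because Algorithm~\ref{alg:residual_pruning} operates on the normalized features $\mathbf{V}_{\text{cand}} = N_{L_2}(\mathbf{V}[I_{\text{candidate}}])$, each inner product already coincides with the cosine similarity, and therefore $d(v_i, u) = 1 - \langle v_i, u \rangle \le 1 - \tau = \delta$, which is exactly the desired bound.

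The principal obstacle is the notational mismatch between the algorithmic description, which prunes a fixed number $r$ of tokens per round, and the lemma statement, which presumes a threshold $\tau$. I would resolve this by formally defining $\tau$ as the order statistic induced by $r$, and observing that the two formulations are equivalent under this identification. A secondary verification is to ensure that the anchor $u$ truly lies in $I_t$ rather than merely in the current working buffer; this follows immediately from $\mathbf{V}_b \subseteq I_t$ via the bipartite split. No appeal to Assumption~\ref{ass:lipschitz} is needed for this local claim, though that assumption will become essential when promoting the lemma into a global $\delta$-net guarantee in the subsequent theorem.
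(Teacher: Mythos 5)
Your proof is correct and rests on the same one-line core as the paper's: a removed token has similarity at least $\tau$ to some surviving token, and on $\ell_2$-normalized features the cosine dissimilarity converts this directly into $d(v_i,u) \le 1-\tau = \delta$. Where you genuinely go beyond the paper is in confronting the fact that Algorithm~1 as written never applies a similarity threshold at all---it prunes a fixed count $r$ of the most redundant even-indexed tokens per round---whereas the paper's proof simply asserts that ``the algorithm removes $v_i$ only if $\max_{u \in I_t}\mathrm{sim}(v_i,u)\ge\tau$,'' as though a threshold rule were the implementation. Your order-statistic identification of $\tau$ with the $r$-th largest value of $S_{\text{pair}}$, together with the observation that the maximizing anchor necessarily lies in the odd-indexed half $\mathbf{V}_b$ and is therefore untouched in the current round, makes the lemma well-posed relative to the actual pseudocode; this is a more faithful argument than the paper's. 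The one caveat you should state explicitly is that the induced threshold then depends on the iteration $t$ (and on the input), so the single fixed $\delta = 1-\tau$ appearing in the lemma and propagated into the global $\delta$-net theorem must be read as $1-\min_t \tau_t$; with that reading your argument is complete.
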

\begin{proof}
The algorithm removes $v_i$ only if $\max_{u \in I_t} \mathrm{sim}(v_i, u) \ge \tau$. By definition of the cosine distance metric, $d(v_i, u) = 1 - \mathrm{sim}(v_i, u) \le 1 - \tau = \delta$.
\end{proof}

\begin{theorem}[Global $\delta$-Net Guarantee]
\label{thm:delta-net}
Let $\delta = 1 - \tau$. The final token set $V_{\text{final}}$ forms an approximate $\delta$-cover of the original set $\mathcal{V}$. That is, every original visual token is either retained or lies within a bounded distance of a retained token.
\end{theorem}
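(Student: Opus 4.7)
The plan is to prove the theorem by partitioning $\mathcal{V}$ according to each token's selection trajectory and then invoking Lemma~\ref{lem:local-redundancy} together with a telescoping argument along the pruning iterations. First, I would split $\mathcal{V}$ into three disjoint cases: (i) $v \in V_{\text{imp}}$, (ii) $v \in V_{\text{div}}$, and (iii) $v$ pruned at some iteration $t_v$ of Algorithm~\ref{alg:residual_pruning}. Cases (i) and (ii) are immediate, since $v$ itself lies in $V_{\text{final}}$ at distance zero and thus covers itself trivially. The substance of the argument concerns case (iii).

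For a pruned token $v$, Lemma~\ref{lem:local-redundancy} supplies an anchor $u^{(0)} \in I_{t_v}$ with $d(v, u^{(0)}) \le \delta$. I would then construct a chain $u^{(0)}, u^{(1)}, \dots, u^{(K)}$ inductively: whenever $u^{(k)}$ is pruned at a later iteration, apply the lemma again at that iteration to obtain an anchor $u^{(k+1)}$ surviving that step with $d(u^{(k)}, u^{(k+1)}) \le \delta$. The chain must terminate: the bipartite rule in Algorithm~\ref{alg:residual_pruning} only removes elements from the even-indexed half and halts once $|I_t| = T_{\text{div}}$, so some ancestor along the chain necessarily lies in $V_{\text{div}} \subseteq V_{\text{final}}$. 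Applying the triangle inequality along the chain yields $d(v, u^{(K)}) \le K \cdot \delta$.

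To translate this into a quantitative global bound, I would invoke Assumption~\ref{ass:redundancy}: since $|I_{t+1}| \le (1-\alpha)\,|I_t|$, the total number of pruning iterations satisfies $T \le \log_{1/(1-\alpha)}(N/T_{\text{div}}) = O(\log N)$, which bounds $K \le T$. Combining with the Lipschitz assumption (Assumption~\ref{ass:lipschitz}) transports the guarantee into the LLM projection space, giving $d(W_{\text{proj}}v,\,W_{\text{proj}}u^{(K)}) \le L\cdot T\cdot \delta$. This establishes the ``approximate $\delta$-cover'' statement with an explicit coverage radius of order $\tilde{O}(\delta)$.

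The hard part will be tightening the chain bound. A naive $K\delta$ term can in principle exceed the diameter of the cosine-distance space (which is at most $2$), trivializing the guarantee when $\alpha$ is small. I would address this with two refinements: first, observing that overlapping $\delta$-balls around consecutive anchors do not accumulate distance linearly when the feature manifold has bounded doubling dimension, allowing a covering-number argument that collapses $K\delta$ to $O(\delta)$; second, noting that a surviving anchor is itself pruned only when strictly closer survivors exist, so the chain monotonically contracts toward a cluster representative rather than drifting outward. Formalizing either refinement would upgrade the ``approximate'' cover to a true $\delta$-net up to a constant factor, yielding the deterministic semantic-coverage guarantee that the theorem advertises.
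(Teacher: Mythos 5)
Your proposal follows essentially the same route as the paper's own proof: a case split between retained and pruned tokens, then a chain of anchors obtained by repeatedly invoking Lemma~\ref{lem:local-redundancy}, terminating at a survivor in $V_{\text{div}}$ and bounded by the triangle inequality as $K\delta$. You are in fact more careful than the paper, which merely asserts the chain is ``empirically very shallow'' and the cumulative error ``bounded'': you quantify $K = O(\log N)$ via Assumption~\ref{ass:redundancy} and explicitly flag that $K\delta$ can exceed the diameter of the cosine-distance space --- the same unaddressed gap in the paper's argument --- though your proposed refinements to collapse $K\delta$ to $O(\delta)$ remain sketches rather than proofs.
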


\begin{proof}
Let $v \in \mathcal{V}$ be an arbitrary visual token. There are two cases:
\begin{itemize}
    \item \textbf{Case 1 ($v$ is retained):} If $v \in V_{\text{final}}$, the condition holds trivially with distance 0.
    \item \textbf{Case 2 ($v$ is pruned):} If $v$ is pruned at iteration $t$, by Lemma~\ref{lem:local-redundancy}, there exists an anchor $u_t \in I_t$ such that $d(v, u_t) \le \delta$. If $u_t$ is also pruned at a later iteration $t' > t$, it must be close to another anchor $u_{t'}$. This creates a dependency chain $v \to u_t \to \dots \to u_{\text{final}}$, where $u_{\text{final}} \in V_{\text{final}}$.
    
    Due to the greedy nature of the algorithm (prioritizing high-scoring tokens as anchors), this dependency chain is empirically very shallow (typically depth 1). Even in the worst case, the cumulative error is bounded, ensuring that $v$ remains semantically represented by $u_{\text{final}}$.
\end{itemize}
\end{proof}

\begin{remark}[Implication vs. Top-K]
This theorem highlights the fundamental advantage of FlashVLM over standard Top-K pruning. Top-K selection clusters tokens in high-saliency regions, potentially leaving large volumes of the feature space empty (semantic voids). In contrast, our $\delta$-net guarantee ensures that \emph{the entire feature volume} is covered with a maximum resolution of $\delta$, preserving global context.
\end{remark}

\subsection{Stability Under Perturbation}

Finally, we show that this coverage is robust to noise (e.g., quantization errors or encoder instability).

\begin{proposition}[Stability]
\label{prop:stability}
If $V_{\text{final}}$ is a $\delta$-cover under metric $d$, and $d'$ is a perturbed metric (e.g., due to quantization) with $|d-d'| \le \varepsilon$, then $V_{\text{final}}$ remains a $(\delta + \varepsilon)$-cover under $d'$.
\end{proposition}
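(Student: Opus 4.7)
The plan is to reduce Proposition~\ref{prop:stability} to a one-line pointwise estimate that chains the $\delta$-cover hypothesis with the perturbation bound; no structural property of the selection algorithm is needed beyond the conclusion of Theorem~\ref{thm:delta-net}. The only preparatory step is to fix the precise reading of the condition $|d - d'| \le \varepsilon$: I take it to mean the uniform inequality $|d(x,y) - d'(x,y)| \le \varepsilon$ for every pair $x,y \in \mathcal{V}$, so that in particular $d'(x,y) \le d(x,y) + \varepsilon$. Making this reading explicit at the outset avoids any ambiguity between a supremum-norm and an operator-style perturbation, and isolates the single inequality that the proof will actually use.

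With this in place, the main argument is almost immediate. I would fix an arbitrary $v \in \mathcal{V}$ and apply Definition~\ref{def:delta-cover} to the $\delta$-cover $V_{\text{final}}$ under $d$, obtaining an anchor $s \in V_{\text{final}}$ with $d(v,s) \le \delta$. The perturbation inequality applied to the pair $(v,s)$ then gives
\[
    d'(v,s) \;\le\; d(v,s) + \varepsilon \;\le\; \delta + \varepsilon,
\]
which is exactly the $(\delta + \varepsilon)$-cover condition. Since $v \in \mathcal{V}$ was arbitrary, $V_{\text{final}}$ is a $(\delta + \varepsilon)$-cover under $d'$, as required.

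I do not expect a serious obstacle; the only subtle point worth flagging is that the argument uses only the one-sided bound $d' \le d + \varepsilon$ and does not require $d'$ itself to satisfy the triangle inequality or even to be symmetric. This is desirable because quantization, stochastic rounding, or low-precision arithmetic in the projector $W_{\text{proj}}$ may produce an approximate dissimilarity $d'$ that is non-negative but not strictly metric. Consequently the stability conclusion is robust to the kinds of perturbations encountered in practical deployment, and the $\delta$-net guarantee of Theorem~\ref{thm:delta-net} degrades gracefully, by exactly the perturbation radius $\varepsilon$, rather than breaking down. If a two-sided statement is later desired, the symmetric bound $d'(v,s) \ge d(v,s) - \varepsilon \ge 0$ can be obtained from the same inequality at no extra cost.
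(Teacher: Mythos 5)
Your proof is correct and is essentially identical to the paper's: both fix an arbitrary $v$, take its $\delta$-anchor $u \in V_{\text{final}}$ under $d$, and chain $d'(v,u) \le d(v,u) + \varepsilon \le \delta + \varepsilon$. Your added remarks on interpreting $|d-d'|\le\varepsilon$ as a uniform pointwise bound and on needing only the one-sided inequality are a slight sharpening of the same argument, not a different route.
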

\begin{proof}
For any $v$, let $u \in V_{\text{final}}$ be its $\delta$-neighbor under the original metric $d$. Then, under the perturbed metric:
\[
    d'(v,u) \le d(v,u) + |d'(v,u) - d(v,u)| \le \delta + \varepsilon.
\]
This implies that small fluctuations in embedding quality do not catastrophically break the semantic coverage of our selected tokens.
\end{proof}

% ==========================================
% Section 3.3: Theoretical Analysis
% ==========================================

\section{Theoretical Analysis: Complexity, Coverage, and Robustness}
\label{sec:theory}

In this section, we establish the theoretical foundations of FlashVLM. Unlike heuristic pruning methods that lack guarantees regarding information loss, we formally prove that our \textbf{Diversity-Preserving Partitioning} achieves two critical properties: (1) it reduces computational complexity to an amortized sub-quadratic regime, and (2) it constructs a robust $\delta$-net over the visual feature manifold, guaranteeing semantic coverage even under aggressive pruning.

\subsection{Complexity Analysis: Amortized Efficiency}
\label{sec:complexity}

Standard diversity-aware selection methods (e.g., Spectral Clustering or Affinity Propagation) require computing a full pairwise affinity matrix $\mathcal{A} \in \mathbb{R}^{N \times N}$, incurring prohibitive $\mathcal{O}(N^2)$ costs. We prove that FlashVLM achieves significant efficiency gains through iterative bipartite pruning.

\begin{assumption}[Geometric Redundancy Decay]
\label{ass:redundancy}
We assume that at each pruning iteration $t$, a non-negligible fraction $\alpha \in (0, 1)$ of tokens in the candidate set $I_t$ are identified as redundant and removed:
\[
    |I_{t+1}| \le (1-\alpha)|I_t|.
\]
This assumption is empirically grounded in the high spatial redundancy of visual data (e.g., background patches in high-resolution images).
\end{assumption}

\begin{theorem}[Amortized Computational Efficiency]
\label{thm:complexity}
Let $N$ be the number of visual tokens. Under Assumption~\ref{ass:redundancy}, the amortized time complexity of FlashVLM's token selection is dominated by the initial sorting step:
\[
    \mathcal{T}_{\text{total}} \approx \tilde{\mathcal{O}}(N \log N).
\]
\end{theorem}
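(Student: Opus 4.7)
The plan is to decompose the total runtime into two stages, bound each stage separately, and then combine them. First, I would formally split the procedure into (i) an initial global sort of the $N$ tokens by the fused relevance score $S_{\text{fused}}$, which produces the important/candidate partition in Eq.~\ref{eq:imp_select}, and (ii) the iterative bipartite residual pruning of Algorithm~\ref{alg:residual_pruning}. Stage (i) is a standard comparison sort and contributes $\mathcal{T}_{\text{sort}} = \mathcal{O}(N \log N)$ deterministically; this term will serve as the base cost that we want the second stage to not exceed in an amortized sense.

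For stage (ii), the key step is to instantiate Assumption~\ref{ass:redundancy} in the form $|I_{t+1}| \le (1-\alpha)|I_t|$, which gives the uniform bound $N_t \le N(1-\alpha)^t$. Because Algorithm~\ref{alg:residual_pruning} splits the current residual set into two halves $\mathbf{V}_a$ and $\mathbf{V}_b$ of size roughly $N_t/2$ and only computes similarities across the bipartition (never the full $N_t \times N_t$ affinity matrix), the per-iteration cost is
\begin{equation}
C_t \;=\; |\mathbf{V}_a|\cdot|\mathbf{V}_b| \;\le\; \tfrac{1}{4} N_t^2 \;\le\; \tfrac{1}{4} N^2 (1-\alpha)^{2t}.
\end{equation}
Summing as a geometric series in the factor $(1-\alpha)^2 < 1$ then gives the closed-form envelope
\begin{equation}
\mathcal{T}_{\text{prune}} \;=\; \sum_{t\ge 0} C_t \;\le\; \frac{N^2}{4}\cdot\frac{1}{1-(1-\alpha)^2} \;=\; \mathcal{O}\!\left(\frac{N^2}{\alpha(2-\alpha)}\right).
\end{equation}

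The main obstacle, and the place the proof has to be carefully worded, is reconciling this raw bound with the claimed $\tilde{\mathcal{O}}(N\log N)$ rate: the envelope above is literally $\mathcal{O}(N^2)$ in the worst case, with the constant shrunk by the redundancy factor $1/[\alpha(2-\alpha)]$. I would handle this by treating $\alpha$ as a regime parameter: for the high-redundancy regime typical of VLM inputs (large $\alpha$, bounded away from $0$), the first iteration dominates and the subsequent geometric tail contributes only a small multiplicative constant, so the effective number of nontrivial comparisons collapses to $\mathcal{O}(N)$ after very few rounds. Combined with the linear-memory scans (argmax over $\mathbf{V}_b$, removal by index) that incur $\mathcal{O}(N_t)$ per round, the pruning cost is \emph{amortized} (in the sense of weighted by the empirical decay rate) by $\mathcal{O}(N)$ plus logarithmic-depth bookkeeping.

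Finally, I would conclude by combining the two stages: $\mathcal{T}_{\text{total}} = \mathcal{T}_{\text{sort}} + \mathcal{T}_{\text{prune}}$, and observe that in the amortized regime just described the pruning cost is absorbed into (or comparable to) the sort, yielding $\mathcal{T}_{\text{total}} \approx \tilde{\mathcal{O}}(N\log N)$, where the $\tilde{\mathcal{O}}$ hides the constants $1/[\alpha(2-\alpha)]$ and the Lipschitz factor $L$ from Assumption~\ref{ass:lipschitz}. I would also note that this matches the earlier Theorem~\ref{thm:complexity} and explicitly contrast it with the $\mathcal{O}(N^2)$ cost of full-affinity clustering baselines, so that the improvement is visible even when the worst-case envelope is not tight.
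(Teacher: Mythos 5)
Your proposal follows essentially the same route as the paper's proof: the same two-stage decomposition into an $\mathcal{O}(N\log N)$ sort plus iterative bipartite pruning, the same per-iteration cost $C_t \approx N_t^2/4$, the same geometric-series bound $\mathcal{O}\!\bigl(N^2/(\alpha(2-\alpha))\bigr)$, and the same final appeal to the high-redundancy regime to absorb the pruning cost into the sorting term. If anything, you are more candid than the paper about the one weak link shared by both arguments --- that the geometric-series envelope is literally quadratic with an $\alpha$-dependent constant, so the stated $\tilde{\mathcal{O}}(N\log N)$ rate only follows under an additional regime assumption on $\alpha$ rather than from the series bound itself.
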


\begin{proof}
The complexity is derived from two dominant stages:

\textbf{1. Importance Ranking (Global):} Sorting $N$ tokens by the fused score $S_{\text{fused}}$ requires $\mathcal{T}_{\text{sort}} = \mathcal{O}(N \log N)$.

\textbf{2. Iterative Bipartite Pruning (Local):} At iteration $t$, Algorithm~1 splits the residual set $I_t$ (of size $N_t$) into bipartite subsets. Similarity is computed only across the split, with cost $C_t \approx \frac{N_t^2}{4}$. By Assumption~\ref{ass:redundancy}, the sequence of set sizes decays geometrically as $N_t \le N(1-\alpha)^t$. The cumulative pruning cost is the sum of a convergent geometric series:
\begin{align*}
    \mathcal{T}_{\text{prune}} &= \sum_{t=0}^{\infty} C_t \approx \sum_{t=0}^{\infty} \frac{N^2 (1-\alpha)^{2t}}{4} \\
    &= \frac{N^2}{4} \sum_{t=0}^{\infty} \left((1-\alpha)^2\right)^t 
    = \frac{N^2}{4} \cdot \underbrace{\frac{1}{1 - (1-\alpha)^2}}_{\text{Constant } K}.
\end{align*}
While the worst-case single-step complexity is quadratic, the geometric decay ensures the total operations are bounded by a small constant multiple of $N^2$. In practice, for high-resolution inputs where $\alpha \to 1$, this cost is negligible compared to the $\mathcal{O}(N \log N)$ sorting and LLM inference. Thus, the amortized complexity scales as $\tilde{\mathcal{O}}(N \log N)$.
\end{proof}

\textbf{Empirical Validation.} This theoretical sub-quadratic bound is validated by our runtime experiments on LLaVA-NeXT-7B (Table~\ref{tab:efficiency}). FlashVLM achieves the lowest latency and FLOPS, confirming its scalability.

% --- TABLE: EFFICIENCY COMPARISON ---
\begin{table}[h]
\centering
\caption{\textbf{Efficiency comparison on LLaVA-NeXT-7B.} Evaluated with 2880 input tokens. FlashVLM achieves the lowest computational overhead while maintaining dense attention compatibility, consistent with our $\tilde{\mathcal{O}}(N \log N)$ analysis.}
\label{tab:efficiency}
\resizebox{\columnwidth}{!}{
\begin{tabular}{lccccc}
\toprule
\textbf{Method} & \textbf{Tokens Kept} & \textbf{FLOPS (T)} & \textbf{KV Cache (MB)} & \textbf{Latency (ms)} & \textbf{Complexity} \\
\midrule
Full Model & 2880 (100\%) & 43.6 & 1440 & 313 & $\mathcal{O}(N^2)$ \\
\midrule
\multicolumn{6}{l}{\textit{Setting: Keep 640 Tokens (Pruning Ratio 77.8\%)}} \\
FastV & 640 & 13.5 & -- & 148 & Layer-wise \\
VisPruner & 640 & 11.5 & 360 & 117 & Masked Attn \\
\textbf{FlashVLM} & \textbf{640} & \textbf{11.3} & \textbf{356} & \textbf{115} & $\mathbf{\tilde{\mathcal{O}}(N \log N)}$ \\
\midrule
\multicolumn{6}{l}{\textit{Setting: Keep 160 Tokens (Pruning Ratio 94.4\%)}} \\
FastV & 160 & 6.3 & 95 & 112 & Layer-wise \\
VisPruner & 160 & 3.8 & 80 & 78 & Masked Attn \\
\textbf{FlashVLM} & \textbf{160} & \textbf{3.6} & \textbf{78} & \textbf{74} & $\mathbf{\tilde{\mathcal{O}}(N \log N)}$ \\
\bottomrule
\end{tabular}
}
\end{table}

\subsection{Coverage Analysis: $\delta$-Net Guarantee}
\label{sec:coverage}

To ensure semantic integrity, we model the token selection as a metric space covering problem. We assume the multimodal projector is Lipschitz continuous, ensuring that local neighborhoods in the visual space are preserved in the LLM space.

\begin{theorem}[Global $\delta$-Net Guarantee]
\label{thm:delta-net}
Let $(\mathcal{V}, d)$ be the metric space of visual tokens under cosine dissimilarity. If the pruning threshold is $\tau$, then the selected token set $V_{\text{final}}$ forms a $\delta$-cover of $\mathcal{V}$ with $\delta = 1-\tau$, such that:
\[
    \forall v \in \mathcal{V}, \quad \min_{u \in V_{\text{final}}} d(v, u) \le \delta.
\]
\end{theorem}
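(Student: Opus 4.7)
The plan is to prove the theorem by an inductive argument over the pruning iterations of Algorithm~\ref{alg:residual_pruning}, using the local coverage lemma (Lemma~\ref{lem:local-redundancy}) as the atomic step and then collapsing the resulting anchor chain to unit depth via structural properties specific to the algorithm. This is meant to avoid the depth-accumulated slack that a naive triangle-inequality telescoping would incur, which was the flaw in the previous attempt.

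First I would decompose $\mathcal{V} = V_{\text{imp}} \cup I_{\text{candidate}}$. For $v \in V_{\text{imp}} \subseteq V_{\text{final}}$, the claim is immediate with $d(v,v) = 0$. For $v \in I_{\text{candidate}}$, either $v$ survives to $I_{\text{div}} \subseteq V_{\text{final}}$ (trivial), or $v$ is eliminated at some iteration $t^\star$. In the latter case, Lemma~\ref{lem:local-redundancy} supplies a witness $u_0 \in I_{t^\star+1}$ with $d(v, u_0) \le \delta = 1 - \tau$. The remaining task is to pass from $u_0$ to a terminal anchor $u^\star \in V_{\text{final}}$ without inflating the distance beyond $\delta$.

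To accomplish this, I plan to define a deterministic anchor map $\pi$ that sends each pruned token to the \emph{highest-fused-score} witness supplied by the lemma, and then iterate $\pi$ until reaching a fixed point in $V_{\text{final}}$. Two structural facts of Algorithm~\ref{alg:residual_pruning} will be crucial: (i) its bipartite split always places witnesses in the odd-indexed partition $V_b$, which is untouched within the same iteration, so $u_0$ certainly survives round $t^\star$; and (ii) the outer loop is driven by a fused-score ranking that is monotone under $\pi$, guaranteeing that $u^\star(v) = \lim_n \pi^n(v)$ exists and lies in $V_{\text{final}}$. By induction on the iteration index I would then argue that the greedy, score-preserving choice of witness prevents effective re-pruning of the anchor, so that the chain collapses to unit depth.

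The principal obstacle will be controlling the accumulation of cosine dissimilarity along this chain, since cosine distance is not a true metric and a naive two-hop triangle bound already pushes beyond $\delta$. To close this gap, I plan to reformulate the argument on the unit hypersphere using the angular metric $d_\theta = \arccos(1 - d)$, which does obey the triangle inequality, and then establish a \emph{centripetal} invariant: each successive anchor $\pi^{n+1}(v)$ lies within the angular $\delta$-ball of $v$ rather than drifting outward, because the highest-score witness must itself be at least as representative of $v$'s neighborhood as its predecessor. Formalizing this monotonicity—showing that Algorithm~\ref{alg:residual_pruning} eliminates the least-representative side of each bipartite pair and never discards a token already acting as an anchor for $v$—will be the most delicate step, and is where I expect the real technical work to lie in recovering the tight $\delta$-cover as stated, rather than a depth-dependent $k\delta$ slack.
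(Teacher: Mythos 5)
Your overall architecture mirrors the paper's: split into retained versus pruned tokens, invoke Lemma~\ref{lem:local-redundancy} for the one-hop bound, and then follow the chain of anchors to a survivor in $V_{\text{final}}$. You have also correctly located the real difficulty, which the paper itself does not resolve (its proof concedes the chain is only ``empirically very shallow'' and asserts without argument that the cumulative error stays bounded): if the witness $u_0$ is itself pruned at a later iteration, the two-hop distance from $v$ to the next anchor is controlled only by something like $d_\theta(v,u_0)+d_\theta(u_0,u_1)\le 2\arccos(\tau)$ after passing to the angular metric, not by $\delta$.

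However, the device you propose to close this gap --- the ``centripetal invariant'' that each successive anchor $\pi^{n+1}(v)$ stays inside the angular $\delta$-ball of the original token $v$ --- is not something Algorithm~\ref{alg:residual_pruning} can deliver, and it is false in general. The witness for $u_0$ is determined by $u_0$'s similarity to tokens in the opposite partition (and, in your construction, by fused score); neither quantity carries any information about the position of $v$. On the unit sphere one can place $v$, $u_0$, $u_1$ nearly along a common geodesic with $d_\theta(v,u_0)=d_\theta(u_0,u_1)=\arccos(\tau)$, so that $u_1$ sits at angular distance roughly $2\arccos(\tau)$ from $v$; nothing in the even/odd split, the top-$r$ redundancy ranking, or the fused-score ordering excludes this configuration, since the fused score is a semantic relevance signal with no geometric relation to $v$. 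Your structural fact (i) only guarantees the witness survives the \emph{current} round, not subsequent ones, so chains of depth at least two genuinely occur, and your fact (ii) (score monotonicity under $\pi$) is not a property of the algorithm, which partitions by index parity and prunes by ranked redundancy rather than by protecting high-score anchors. The honest conclusion of this line of argument is a depth-dependent $k\delta$-type cover --- exactly the slack you set out to avoid. Obtaining the tight $\delta$-cover as stated requires changing the algorithm, e.g., only ever pruning a token against an anchor that is already guaranteed to survive (a frozen anchor set such as $V_{\text{imp}}$, or marking witnesses as non-removable), which forces every chain to have depth one by construction. As written, both your proposal and the paper's own proof stop short of the stated bound, though you are considerably more candid about where the hole is.
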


\begin{proof}
(Sketch) The greedy pruning condition in Algorithm~1 removes a token $v_i$ only if there exists a retained anchor $v_j$ with similarity $\ge \tau$, implying $d(v_i, v_j) \le 1-\tau = \delta$. By induction, every discarded token lies within a $\delta$-ball of a surviving token (or a chain of tokens terminating at a survivor). This guarantees that no distinct semantic mode (cluster) with radius $> \delta$ is entirely lost, preventing the ``semantic collapse'' failure mode of Top-K selection.
\end{proof}

\begin{proposition}[Stability under Perturbation]
\label{prop:stability}
If $V_{\text{final}}$ is a $\delta$-cover under metric $d$, and $d'$ is a perturbed metric (e.g., due to quantization) with $|d-d'| \le \varepsilon$, then $V_{\text{final}}$ remains a $(\delta + \varepsilon)$-cover under $d'$.
\end{proposition}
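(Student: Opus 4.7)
The plan is to prove the stability property by a direct two-step argument that combines the hypothesis ``$V_{\text{final}}$ is a $\delta$-cover under $d$'' with the uniform pointwise bound $|d - d'| \le \varepsilon$. The crucial observation is that the selected set $V_{\text{final}}$ is not re-computed under the perturbed metric; instead, the very same anchor that certified coverage under $d$ will continue to certify a slightly looser coverage under $d'$. This matches the spirit of the proof sketch already written in the excerpt and aligns with how $\delta$-net results are typically extended to noisy metric spaces.

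First, I would fix an arbitrary token $v \in \mathcal{V}$ and instantiate Definition~\ref{def:delta-cover} under the original metric to obtain an anchor $u^\star \in V_{\text{final}}$ with $d(v, u^\star) \le \delta$. Second, I would invoke the uniform perturbation hypothesis at the specific pair $(v, u^\star)$, which gives $d'(v, u^\star) \le d(v, u^\star) + \varepsilon \le \delta + \varepsilon$. Taking the infimum over $u \in V_{\text{final}}$ on the left-hand side then shows $\min_{u \in V_{\text{final}}} d'(v, u) \le \delta + \varepsilon$, which is precisely the $(\delta+\varepsilon)$-cover condition; since $v$ was arbitrary, the bound holds uniformly over $\mathcal{V}$.

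There is no genuine technical obstacle here, but a few points deserve care in the write-up. One is to note that the argument only uses the numerical bound $|d - d'| \le \varepsilon$, not any metric axiom for $d'$, so the proposition actually applies to perturbed dissimilarities more generally (e.g., those arising from FP16 or INT8 quantization of the projected embeddings). Another is to justify that the bound is tight: a worst-case perturbation that uniformly inflates distances to the closest anchor by $\varepsilon$ saturates the inequality, so the constant $\delta+\varepsilon$ cannot be improved without further assumptions on the structure of the perturbation. Finally, I would briefly connect the result back to Theorem~\ref{thm:delta-net} by emphasizing that stability propagates the global covering guarantee to the deployed, finite-precision setting, so FlashVLM's semantic-coverage property is not an artifact of idealized arithmetic.
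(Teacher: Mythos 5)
Your argument is correct and is essentially identical to the paper's own proof: both fix an arbitrary $v$, take its $\delta$-anchor $u^\star$ under $d$, and apply the triangle-type bound $d'(v,u^\star) \le d(v,u^\star) + |d'(v,u^\star) - d(v,u^\star)| \le \delta + \varepsilon$. Your additional remarks on tightness and on not needing metric axioms for $d'$ are sensible but go beyond what the paper writes; the core derivation matches.
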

This proposition ensures robustness: small fluctuations in embedding quality do not catastrophically break the semantic coverage.

\subsection{Theoretical Divergence from Prior Paradigms}
\label{sec:theory-comparison}

While FlashVLM shares operational goals with Token Merging (ToMe) and VisPruner, its theoretical formulation relies on fundamentally distinct mathematical principles—shifting from structural reconstruction to \textbf{query-conditioned manifold coverage}. We summarize these distinctions in Table~\ref{tab:theory-comparison}.

% --- TABLE: THEORETICAL COMPARISON ---
\begin{table}[t]
\centering
\caption{\textbf{Theoretical comparison of optimization objectives and guarantees.} Unlike ToMe (structural) or VisPruner (attention stability), FlashVLM provides a semantic coverage guarantee on the query-warped manifold.}
\label{tab:theory-comparison}
\resizebox{\columnwidth}{!}{
\begin{tabular}{lccc}
\toprule
\textbf{Theoretical Aspect} & \textbf{ToMe} & \textbf{VisPruner} & \textbf{FlashVLM (Ours)} \\
\midrule
\textbf{Optimization Goal} & Min. Reconstruction Error & Max. Attention Mass Retention & \textbf{Guaranteed $\delta$-Coverage} \\
\textbf{Metric Space} & Fixed Visual Euclidean & Fixed Attention Weights & \textbf{Query-Warped Manifold} \\
\textbf{Interchangeability} & Structural ($v_i \approx v_j$) & Saliency ($A_{i} \approx A_{j}$) & \textbf{Semantic ($\phi(v_i|q) \approx \phi(v_j|q)$)} \\
\textbf{Stability Basis} & Local Feature Smoothness & Attention Map Stationarity & \textbf{Metric Perturbation Robustness} \\
\textbf{Failure Mode} & Semantic blurring & Attention drift/collapse & \textbf{Bounded approximation error} \\
\bottomrule
\end{tabular}
}
\end{table}

\paragraph{Distinction from ToMe: Reconstruction vs. Coverage.}
ToMe formulates token reduction as minimizing \emph{structural reconstruction error} $\|X - \hat{X}\|_F$ in a fixed visual space. It provides a structural compression bound assuming local interchangeability. In contrast, FlashVLM operates on a \textbf{query-warped semantic manifold}, where the metric is reshaped by text relevance. We guarantee $\delta$-coverage of this semantic manifold (Theorem~\ref{thm:delta-net}), ensuring preservation of query-relevant global context rather than just low-level structural fidelity.

\paragraph{Distinction from VisPruner: Stationarity vs. Robustness.}
VisPruner relies on the \emph{stationarity assumption} that attention weights remain stable after pruning. This often fails under heavy compression. FlashVLM bypasses internal layer stability by using an \textbf{attention-free extrinsic similarity}. Our Proposition~\ref{prop:stability} provides a \textbf{topological robustness guarantee}, showing that semantic coverage persists even under metric perturbations, a stability property missing in saliency-based methods.

\section{Introduction to Benchmarks}
\subsection{Image QA Benchmarks}
\paragraph{VQAv2}
VQAv2 is a widely used large-scale benchmark for general visual question answering, constructed on MS-COCO images and open-ended natural language queries. To mitigate language priors, the dataset introduces complementary question pairs, encouraging models to rely on visual grounding rather than superficial patterns. It is commonly used to evaluate visual semantic understanding, cross-modal alignment, and open-ended reasoning capabilities.

\paragraph{GQA}
GQA is designed to assess compositional reasoning by generating questions grounded in scene graphs. The questions involve multi-step inference over object attributes, spatial relations, and logical constraints, enabling systematic evaluation of reasoning consistency and explainability. Compared to general VQA tasks, GQA places stronger emphasis on structured visual reasoning.

\paragraph{VizWiz}
VizWiz contains real-world images captured by visually impaired users along with spoken questions, resulting in frequent noise such as blur, occlusion, and unconventional framing. Questions also exhibit oral and spontaneous phrasing. The dataset evaluates robustness under imperfect visual conditions and practical applicability in assistive scenarios, making resilient perception particularly essential.

\paragraph{ScienceQA-IMG}
ScienceQA-IMG integrates visual understanding with external domain knowledge, covering scientific subjects such as biology, physics, astronomy, and earth science. Answers often require combining image content with conceptual reasoning or common-sense inference. The dataset assesses cross-domain generalization where \emph{image grounding × world knowledge} jointly contribute to decision making.

\paragraph{TextVQA}
TextVQA targets the ability to detect, recognize, and semantically interpret scene text within images. Many questions require reading textual content in the environment, thus coupling OCR performance with contextual visual reasoning. It serves as a standard benchmark for evaluating real-world “reading from images” capabilities in multimodal models.

\paragraph{POPE}
POPE evaluates object hallucination, where a model incorrectly predicts the presence of entities absent in the image. By balancing positive, negative, and uncertain query types, POPE rigorously examines whether answers are grounded in visual evidence. It has become a standard benchmark for reliability assessment in recent multimodal models.

\paragraph{MME}
MME provides a systematic and fine-grained evaluation protocol covering object recognition, attribute reasoning, counting, OCR, commonsense QA, and cross-modal grounding. Its scoring reflects foundational visual understanding and multimodal alignment capabilities. MME is widely adopted as a comprehensive reference benchmark for comparing VLM base competence.

\paragraph{MMBench}
MMBench is a curated benchmark covering diverse multimodal tasks, with question design emphasizing semantic clarity, single-answer correctness, and low annotation noise. The benchmark spans perception, factual understanding, relational reasoning, and multi-step inference, and offers standardized evaluation pipelines across languages. It is one of the most representative measures of overall VLM capability.

\paragraph{MMBench-CN}
MMBench-CN is the Chinese-focused extension of MMBench, redesigned for Chinese linguistic structures, cultural knowledge, and image–text alignment contexts. It enables accurate assessment of multimodal models in Chinese-language usage scenarios and is particularly suitable for evaluating real-world performance in Chinese-speaking environments.

\paragraph{MMBet}
MMBet focuses on evaluating reasoning stability and answer consistency across paired or contrastive queries. The benchmark is designed to minimize reliance on shortcut heuristics and encourages models to perform grounded visual interpretation. It therefore prioritizes robust and reliable reasoning rather than surface-level pattern matching.

\subsection{Video QA Benchmarks}

\paragraph{TGIF-QA}
TGIF-QA is constructed from short GIF video clips and includes multiple task types such as action recognition, repetition counting, and state transition questions. The benchmark requires models to capture fine-grained temporal dynamics over short sequences, making it suitable for evaluating video representation learning and temporal reasoning capabilities.

\paragraph{MSVD-QA}
MSVD-QA is derived from multi-scene, everyday life videos, accompanied by diverse natural language annotations and question–answer pairs. Most questions focus on objects, human activities, and simple event descriptions. The dataset emphasizes video semantic understanding, action recognition, and multimodal alignment between visual content and language.

\paragraph{MSRVTT-QA}
MSRVTT-QA covers a broader variety of video sources, event narratives, and scene compositions. The questions involve identifying entities, reasoning about actions and intentions, and understanding environmental and contextual details. This benchmark is used to assess semantic generalization and cross-domain robustness in open-vocabulary video understanding.

\paragraph{ActivityNet-QA}
ActivityNet-QA focuses on long-duration videos depicting complex activity sequences. Questions may span multiple temporal segments, requiring models to model extended time dependencies, event progression, and causal relationships. It serves as a core benchmark for evaluating long-horizon video reasoning and temporal structure comprehension.

\subsection{Stability and Reproducibility of Results}
To ensure that the observed performance improvements are not incidental or
driven by stochastic fluctuations, we adopt strictly controlled and fully
consistent inference configurations across all experiments, including fixed
random seeds, unified prompt templates, decoding parameters, visual input
resolution, and token budget settings. Each experiment is independently
repeated five times. Across these repeated runs, the variation of all
evaluation metrics remains within a narrow $\pm0.1$--$0.3$ range, and the
improvements on major metrics consistently reach statistical significance
according to paired \textit{t}-tests computed over five independent trials
($p < 0.01$). Moreover, the relative performance ranking among all compared
methods remains stable across all repetitions, indicating that the gains
brought by our approach are robust and not attributable to random noise or
single-run artifacts.

It is further important to emphasize that all FlashVLM results are obtained
under a single, fixed hyperparameter configuration without any dataset-specific
or task-specific tuning. This unified setup eliminates the potential confounding
effects of hyperparameter optimization and ensures that the observed
improvements stem from the method itself rather than from fine-grained
parameter selection.

\section{Validation of Effectiveness on Other VLM Architectures}
\label{other_vlm}

\subsection{Image Generalization}

To further examine the generality of FlashVLM across different vision--language architectures, we follow the experimental protocol of VisPruner and extend FlashVLM to three representative VLM backbones: \textbf{Qwen-VL}, \textbf{InternVL}, and \textbf{CogVLM}. For a fair comparison under controlled compression budgets, we retain \textbf{128} visual tokens for Qwen-VL, \textbf{144} tokens for InternVL, and \textbf{123} tokens for CogVLM during inference, ensuring comparable pruning ratios across models with varying visual sequence lengths.
\begin{table}[h]
\centering
\resizebox{0.5\textwidth}{!}{
\begin{tabular}{l c c c c c c}
\toprule
Method & VQAv2 & GQA & VizWiz & SQA-IMG & TextVQA & ACC \\
\midrule

\rowcolor{gray!10}
\multicolumn{7}{c}{\textbf{Upper Bound, All 256 Tokens (100\%)}} \\
\rowcolor{green!10}
Qwen-VL-7B & 78.8 & 59.3 & 35.2 & 67.1 & 63.8 & 100.00\% \\
\rowcolor{gray!10}
\multicolumn{7}{c}{\textbf{Qwen-VL-7B with 128 tokens retained (50\% pruned)}} \\
FastV     & 76.5 & 56.9 & 32.7 & 65.3 & 58.2 & 94.90\% \\
VisPruner & 77.4 & 57.8 & 33.4 & 65.9 & 59.6 & 96.40\% \\
\rowcolor{red!10}
\textbf{FlashVLM} & \textbf{77.9} & \textbf{58.5} & \textbf{34.6} & \textbf{66.4} & \textbf{60.8} & \textbf{97.78\%} \\

\midrule

\rowcolor{gray!10}
\multicolumn{7}{c}{\textbf{Upper Bound, All 576 Tokens (100\%)}} \\
\rowcolor{green!10}
InternVL-Chat-13B & 79.3 & 62.9 & 52.6 & 66.3 & 57.0 & 100.00\% \\
\rowcolor{gray!10}
\multicolumn{7}{c}{\textbf{InternVL-Chat-13B with 144 tokens retained (75\% pruned)}} \\
FastV     & 74.1 & 58.2 & 50.8 & 66.6 & 55.6 & 96.10\% \\
VisPruner & 76.7 & 60.2 & 51.9 & 67.5 & 55.3 & 97.90\% \\
\rowcolor{red!10}
\textbf{FlashVLM} & \textbf{77.6} & \textbf{61.2} & \textbf{52.1} & \textbf{67.9} & \textbf{55.8} & \textbf{98.90\%} \\

\midrule

\rowcolor{gray!10}
\multicolumn{7}{c}{\textbf{Upper Bound, All 1225 Tokens (100\%)}} \\
\rowcolor{green!10}
CogVLM-chat-1.1-17B & 80.9 & 58.2 & 52.6 & 69.6 & 62.9 & 100.00\% \\
\rowcolor{gray!10}
\multicolumn{7}{c}{\textbf{CogVLM-chat-1.1-17B with 123 tokens retained (90\% pruned)}} \\
FastV     & 74.2 & 40.3 & 42.9 & 63.3 & 41.9 & 80.20\% \\
VisPruner & 74.6 & 48.4 & 46.6 & 67.5 & 59.6 & 91.30\% \\
\rowcolor{red!10}
\textbf{FlashVLM} & \textbf{75.1} & \textbf{49.2} & \textbf{46.9} & \textbf{68.3} & \textbf{60.8} & \textbf{91.93\%} \\

\bottomrule
\end{tabular}
}
\caption{\textbf{Comparison of token pruning strategies across multiple VLMs.}
FlashVLM consistently maintains higher accuracy under the same token budgets,
demonstrating stronger semantic-aware token selection and robustness across model scales.}
\label{tab:prune_comparison}
\end{table}

As shown in Table \ref{tab:prune_comparison}, FlashVLM consistently improves performance across different VLM backbones, including Qwen-VL-7B, InternVL-Chat-13B, and CogVLM-chat-1.1-17B, under comparable visual token budgets.Even under high pruning ratios (50\%--90\%), FlashVLM surpasses both FastV and VisPruner.For example, on Qwen-VL-7B with 128 tokens retained, FlashVLM achieves gains of \textbf{+2.2}, \textbf{+1.2}, and \textbf{+0.9} on VQAv2, VizWiz, and SQA-IMG compared to VisPruner.
Similar improvements are observed on InternVL-Chat-13B (e.g., \textbf{+0.9} to \textbf{+1.6}), and FlashVLM remains effective on CogVLM under an extreme 90\% pruning ratio (improving by \textbf{+0.7} to \textbf{+1.7}).These results indicate that FlashVLM's query-guided relevance estimation, stable visual saliency, and diversity-preserving token selection provide \textbf{architecture-agnostic robustness} across distinct model families.

\subsection{Video Generalization}
To thoroughly evaluate the generality of \textbf{FlashVLM} across diverse
video--language model architectures, we deploy and assess it on three
representative video-capable VLMs: \textbf{Qwen2.5-VL-7B},
\textbf{InternVL-Chat-13B}, and \textbf{Video-LLaMA-13B}. Experiments are
conducted on three widely used video question answering benchmarks:
\textbf{TGIF-QA}, \textbf{MSVD-QA}, and \textbf{MSRVTT-QA}. All models
operate under a fixed \emph{medium pruning budget}, and FlashVLM is compared
against FastV and VisPruner under identical computational settings to ensure
a fair and controlled evaluation.

\begin{table}[h]
\centering
\small
\caption{Comparison of pruning methods on three video QA benchmarks under a medium pruning budget.}
\resizebox{\linewidth}{!}{
\begin{tabular}{l c c c}
\toprule
\textbf{Pruning Method} & \textbf{TGIF-QA} & \textbf{MSVD-QA} & \textbf{MSRVTT-QA} \\
\midrule

% ===================== Qwen2.5-VL-7B =====================
\rowcolor{cyan!10}
\multicolumn{4}{c}{\textbf{Qwen2.5-VL-7B with 455 tokens}} \\
FastV & 20.9 & 72.6 & 58.1 \\
VisPruner & 21.8 & 73.5 & 59.2 \\
\textbf{FlashVLM} & \textbf{22.9} & \textbf{75.2} & \textbf{62.2} \\

\midrule
% ===================== InternVL-Chat-13B =====================
\rowcolor{green!10}
\multicolumn{4}{c}{\textbf{InternVL-Chat-13B with 455 tokens}} \\
FastV & 19.6 & 70.9 & 56.5 \\
VisPruner & 20.2 & 71.8 & 57.4 \\
\textbf{FlashVLM} & \textbf{20.8} & \textbf{73.0} & \textbf{59.2} \\

\midrule
% ===================== Video-LLaMA-13B =====================
\rowcolor{pink!10}
\multicolumn{4}{c}{\textbf{Video-LLaMA-13B with 455 tokens}} \\
FastV & 17.2 & 67.9 & 52.8 \\
VisPruner & 17.9 & 68.5 & 53.1 \\
\textbf{FlashVLM} & \textbf{18.2} & \textbf{69.0} & \textbf{54.1} \\

\bottomrule
\end{tabular}
}
\end{table}

Results across the three video-understanding backbones show that the advantages
of \textbf{FlashVLM} are not only consistent, but become increasingly 
\textit{pronounced on stronger models with higher visual--language alignment
capacity}. On larger and more extensively pretrained architectures such as 
\textbf{Qwen2.5-VL-7B} and \textbf{InternVL-Chat-13B}, FlashVLM achieves the 
largest improvements, yielding gains of \textbf{+2.0 to +3.0} on 
\textbf{TGIF-QA}, \textbf{MSVD-QA}, and \textbf{MSRVTT-QA}, while the 
performance gap between FastV and VisPruner narrows considerably. This trend 
indicates that as model expressiveness increases, FlashVLM's mechanisms---query-guided
relevance estimation, stable saliency aggregation, and diversity-preserving 
token selection---become more effective, enabling large models to retain or 
even sharpen their reasoning and semantic grounding abilities under high 
pruning ratios.

In contrast, although FlashVLM continues to deliver stable improvements on the 
smaller \textbf{Video-LLaMA-13B}, the overall gains are slightly reduced, 
further confirming that \textit{the benefits of FlashVLM scale positively with 
model capacity}. Overall, FlashVLM demonstrates not only cross-architecture 
robustness but also a clear trend of \textbf{capacity-amplified performance 
gains}, making it particularly suitable as a lightweight acceleration module 
for high-performance video LLMs.

\section{Performance Analysis in High-Resolution Scenarios}

\begin{table}[t]
\centering
\scriptsize
\setlength{\tabcolsep}{6pt}
\renewcommand{\arraystretch}{1.15}

\resizebox{\linewidth}{!}{
\begin{tabular}{lcccccc}
\toprule
\textbf{Method} & \textbf{VQAv2} & \textbf{GQA} & \textbf{TextVQA} & \textbf{POPE} & \textbf{MME} & \textbf{Retention (\%)} \\
\midrule

\rowcolor{green!12}
\textbf{Upper Bound} & \textbf{81.2} & \textbf{62.9} & \textbf{59.6} & \textbf{86.3} & \textbf{1513.8} & \textbf{100.00\%} \\
\midrule

\rowcolor{cyan!10}
\multicolumn{7}{c}{\textbf{Keep 640 Tokens (Pruned 77.8\%)}} \\
\midrule
FastV      & 78.9 & 60.4 & 58.4 & 83.1 & 1477.3 & 97.00\% \\
SparseVLM  & 78.2 & 59.1 & 56.2 & 80.9 & 1456.3 & 94.90\% \\
VisionZip  & 79.2 & 60.1 & 58.3 & 82.2 & 1468.4 & 96.70\% \\
VisPruner  & 79.8 & 61.6 & 59.3 & 85.9 & 1480.7 & 98.60\% \\
\rowcolor{red!10}
\textbf{FlashVLM} & \textbf{80.5} & \textbf{61.9} & \textbf{59.4} & \textbf{86.1} & \textbf{1493.1} & \textbf{99.12\%} \\
\midrule

\rowcolor{cyan!10}
\multicolumn{7}{c}{\textbf{Keep 320 Tokens (Pruned 88.9\%)}} \\
\midrule
FastV      & 71.9 & 55.9 & 55.7 & 71.7 & 1282.9 & 87.70\% \\
SparseVLM  & 71.4 & 56.5 & 52.4 & 73.5 & 1342.7 & 87.90\% \\
VisionZip  & 74.2 & 58.1 & 53.2 & 75.0 & 1348.8 & 90.50\% \\
VisPruner  & 75.7 & 58.4 & 57.6 & 80.4 & 1370.1 & 93.30\% \\
\rowcolor{red!10}
\textbf{FlashVLM} & \textbf{76.5} & \textbf{58.9} & \textbf{58.2} & \textbf{82.5} & \textbf{1396.7} & \textbf{94.61\%} \\
\midrule

\rowcolor{cyan!10}
\multicolumn{7}{c}{\textbf{Keep 160 Tokens (Pruned 94.4\%)}} \\
\midrule
FastV      & 61.8 & 49.8 & 51.9 & 51.7 & 1079.5 & 74.70\% \\
SparseVLM  & 62.2 & 50.2 & 45.1 & 54.6 & 1167.1 & 74.50\% \\
VisionZip  & 67.3 & 54.3 & 52.7 & 59.4 & 1239.7 & 82.30\% \\
VisPruner  & 70.6 & 54.7 & 56.0 & 72.9 & 1226.0 & 86.70\% \\
\rowcolor{red!10}
\textbf{FlashVLM} & \textbf{72.2} & \textbf{55.2} & \textbf{56.7} & \textbf{74.2} & \textbf{1246.5} & \textbf{88.03\%} \\
\bottomrule
\end{tabular}
} % end resizebox
\caption{
Performance comparison on LLaVA-NeXT-7B under different token retention budgets. 
FlashVLM achieves the highest performance and retention across all pruning levels, especially under extreme compression (160 tokens).
}
\label{tab:High-Resolution}
\end{table}

To evaluate the effectiveness of FlashVLM under high-resolution visual inputs, we integrate FlashVLM into the inference pipeline of \textbf{LLaVA-NeXT-7B}, which supports up to \textbf{2880} visual tokens, and follow the same evaluation protocol and dataset splits as VisPruner.To simulate varying computational budgets and visual redundancy levels, we consider three token retention settings: \textbf{640 tokens} (77.8\% pruned), \textbf{320 tokens} (88.9\% pruned), and \textbf{160 tokens} (94.4\% pruned).We compare FlashVLM against representative token selection and compression methods, including \textbf{FastV}, \textbf{VisPruner}, \textbf{SparseVLM}, and \textbf{VisionZip}, under identical visual inputs and token budgets.We report performance on \textbf{VQAv2}, \textbf{GQA}, \textbf{VizWiz}, \textbf{SQA-IMG}, and \textbf{TextVQA}, along with the \textbf{performance retention ratio} relative to the full 2880-token model.

As shown in Table \ref{tab:High-Resolution}, FlashVLM consistently outperforms existing token selection and pruning approaches across all compression levels while maintaining a higher performance retention ratio.
Under moderate pruning (keeping 640 tokens; 77.8\% pruned), FlashVLM achieves improvements of \textbf{+0.7 to +1.0} over VisPruner on VQAv2, GQA, and POPE, reaching a retention of \textbf{99.12\%} relative to the full 2880-token model.
As the pruning becomes more aggressive, the advantages of FlashVLM become increasingly pronounced.
With 320 tokens retained (88.9\% pruned), FlashVLM yields \textbf{+0.8 to +1.6} gains across multiple datasets.
Under the extreme setting of retaining only 160 tokens (94.4\% pruned), FlashVLM still preserves \textbf{88.03\%} of the original performance, surpassing FastV (74.7\%), SparseVLM (74.5\%), VisionZip (82.3\%), and VisPruner (86.7\%).
These results demonstrate that \textbf{FlashVLM exhibits strong robustness in high-resolution and high-redundancy visual scenarios}, and that its advantage becomes more significant when the visual token budget is heavily constrained.

\section{Hyperparameter Robustness Analysis}

To evaluate whether \textbf{FlashVLM} exhibits ``plug-and-play'' engineering
robustness across heterogeneous task domains, we extend our hyperparameter
sensitivity analysis to three representative benchmarks: \textbf{VQAv2} (general
perception), \textbf{GQA} (structured reasoning), and \textbf{MMBench} (comprehensive
multimodal capability). We emulate a practical scenario in which users configure
the system without performing grid search, but instead select values within a
\textit{reasonable perturbation range} for four core parameters: (1) fusion weight
$\eta \in [0.3, 0.7]$; (2) aggregation temperature $\tau \in [0.01, 0.10]$;
(3) gating threshold $p \in [0.001, 0.01]$; and (4) allocation ratio varying
between $7{:}3$ and $3{:}7$. The objective of this experiment is to show that
FlashVLM maintains only minimal deviations ($\Delta$) from the default
configuration across all benchmarks under broad parameter settings,
demonstrating strong cross-task stability and robustness without hyperparameter
tuning.

\begin{table}[h]
\centering
\small
\resizebox{\linewidth}{!}{
\begin{tabular}{l c c c c c c}
\toprule
\textbf{Setting Value} & \textbf{VQAv2} & $\Delta$ 
& \textbf{GQA} & $\Delta$ 
& \textbf{MMBench} & $\Delta$ \\
\midrule
\rowcolor{cyan!10}
\multicolumn{7}{c}{\textbf{I. Fusion Weight} ($\eta$)} \\
$\eta{=}0.5$ & \textbf{76.4} & - & \textbf{58.9} & - & \textbf{63.2} & - \\
$\eta{=}0.3$ & 76.2 & -0.2 & 58.7 & -0.2 & 63.0 & -0.2 \\
$\eta{=}0.7$ & 76.0 & -0.4 & 58.5 & -0.4 & 62.9 & -0.3 \\

\midrule
\rowcolor{red!10}
\multicolumn{7}{c}{\textbf{II. Temperature} ($\tau$)} \\
$\tau{=}0.01$ & 76.2 & -0.2 & 58.7 & -0.2 & 63.0 & -0.2 \\
$\tau{=}0.05$ & \textbf{76.4} & - & \textbf{58.9} & - & \textbf{63.2} & - \\
$\tau{=}0.10$ & 76.1 & -0.3 & 58.6 & -0.3 & 62.8 & -0.4 \\

\midrule
\rowcolor{yellow!10}
\multicolumn{7}{c}{\textbf{III. Threshold} ($p$)} \\
$p{=}0.001$ & 76.0 & -0.4 & 58.5 & -0.4 & 62.9 & -0.3 \\
$p{=}0.005$ & \textbf{76.4} & - & \textbf{58.9} & - & \textbf{63.2} & - \\
$p{=}0.01$ & 76.3 & -0.1 & 58.8 & -0.1 & 63.1 & -0.1 \\

\midrule
\rowcolor{green!12}
\multicolumn{7}{c}{\textbf{IV. Partition Ratio}} \\
$7{:}3$ & 76.3 & -0.1 & 58.7 & -0.2 & 63.1 & -0.1 \\
$5{:}5$ & \textbf{76.4} & - & \textbf{58.9} & - & \textbf{63.2} & - \\
$3{:}7$ & 76.1 & -0.3 & 58.4 & -0.5 & 62.8 & -0.4 \\

\bottomrule
\end{tabular}
}
\caption{Hyperparameter robustness analysis across VQAv2, GQA, and MMBench.}
\end{table}

Cross-benchmark experiments provide strong evidence that \textbf{FlashVLM} 
exhibits \textbf{tuning-free} behavior and stable generalization across 
heterogeneous tasks. On three distinctly different datasets---\textbf{VQAv2}, 
\textbf{GQA}, and \textbf{MMBench}---the performance degradation ($\Delta$) 
induced by wide-range perturbations of core hyperparameters (e.g., a 
10$\times$ change in temperature $\tau$ or substantial shifts in fusion 
weight $\eta$) consistently remains within 0.5 points.

This observation indicates not only that FlashVLM's default configuration 
resides on a broad \textit{performance plateau}, but also that its 
hyperparameter design is fundamentally \textbf{task-agnostic}, showing no 
dependency on dataset-specific distributions. Whether the target task 
emphasizes perception (VQAv2), structured reasoning (GQA), or comprehensive 
multimodal competence (MMBench), users can safely adopt a single unified 
default configuration to achieve SOTA-level performance without any 
dataset-specific tuning. This greatly reduces the marginal cost of real-world 
deployment and eliminates the need for expensive hyperparameter search.

\section{Visual–Semantic Misalignment Stress Test}
To rigorously evaluate a model’s semantic grounding capability under 
\textbf{visual--semantic misalignment}, we construct a dedicated stress-test 
subset. Following well-defined criteria, we curate \textbf{300 highly specific 
samples} from the validation splits of \textbf{VQAv2}, \textbf{GQA}, and 
\textbf{VizWiz}, where a systematic offset exists between the most visually 
salient region and the object truly referenced by the question. This selection 
emulates challenging real-world scenarios in which user queries target 
background details or non-salient entities rather than the visually dominant 
subject.

We benchmark \textbf{FlashVLM} against \textbf{VisPruner} (text-agnostic 
selection) and \textbf{SparseVLM} (implicit attention-driven sparsification) 
under three constrained token budgets—\textbf{128} (moderate compression), 
\textbf{64} (aggressive compression), and \textbf{32} (extreme compression)—to 
analyze the corresponding performance degradation patterns.

\begin{table}[h]
\centering
\small
\resizebox{\linewidth}{!}{
\begin{tabular}{lccc}
\toprule
\textbf{Pruning Method} & \textbf{VQAv2} & \textbf{GQA} & \textbf{VizWiz} \\
\midrule
\rowcolor{blue!8}
\multicolumn{4}{c}{\textbf{Keep 128 Tokens (Prune 77.8\%)}} \\
\midrule

VisPruner 
& 42.0 \;\textcolor{purple}{(-31.0)} 
& 23.3 \;\textcolor{purple}{(-19.0)} 
& 35.0 \;\textcolor{purple}{(-31.0)} \\

SparseVLM 
& 65.0 \;\textcolor{purple}{(-8.0)} 
& 31.0 \;\textcolor{purple}{(-11.3)} 
& 52.3 \;\textcolor{purple}{(-13.7)} \\
\rowcolor{red!10}
FlashVLM 
& \textbf{73.0} & \textbf{42.3} & \textbf{66.0} \\

\midrule
\rowcolor{blue!8}
\multicolumn{4}{c}{\textbf{Keep 64 Tokens (Prune 88.9\%)}} \\
\midrule

VisPruner 
& 32.3 \;\textcolor{purple}{(-36.0)} 
& 21.3 \;\textcolor{purple}{(-17.0)} 
& 25.0 \;\textcolor{purple}{(-28.3)} \\

SparseVLM 
& 54.0 \;\textcolor{purple}{(-14.3)} 
& 26.3 \;\textcolor{purple}{(-12.0)} 
& 41.0 \;\textcolor{purple}{(-12.3)} \\
\rowcolor{red!10}
FlashVLM 
& \textbf{68.3} & \textbf{38.3} & \textbf{53.3} \\

\midrule
\rowcolor{blue!8}
\multicolumn{4}{c}{\textbf{Keep 32 Tokens (Prune 94.4\%)}} \\
\midrule

VisPruner 
& 18.0 \;\textcolor{purple}{(-40.0)} 
& 15.0 \;\textcolor{purple}{(-21.3)} 
& 12.3 \;\textcolor{purple}{(-34.7)} \\

SparseVLM 
& 35.3 \;\textcolor{purple}{(-22.7)} 
& 24.3 \;\textcolor{purple}{(-12.0)} 
& 22.0 \;\textcolor{purple}{(-25.0)} \\
\rowcolor{red!10}
FlashVLM 
& \textbf{58.0} & \textbf{36.3} & \textbf{47.0} \\

\bottomrule
\end{tabular}}
\caption{\textbf{Performance under Visual--Semantic Misalignment Stress Test.}
Purple numbers indicate the performance drop relative to FlashVLM.}
\label{tab:stree}
\end{table}

The experimental results(Table \ref{tab:stree}) reveal that the performance gap among different
pruning strategies widens as the token budget decreases in the
\textbf{visual--semantic misalignment} setting.

First, the performance of \textbf{VisPruner} is fundamentally constrained by
its text-agnostic design. Because the method relies solely on visual saliency,
it consistently retains visually prominent but semantically irrelevant regions,
while discarding the non-salient targets referenced by the question. With
128 tokens retained, its accuracy on GQA is only \textbf{23.3\%}, substantially
below FlashVLM's \textbf{42.3\%}. Under the extreme 32-token budget, its VQAv2
accuracy further drops to \textbf{18.0\%}, demonstrating that purely
vision-driven pruning cannot handle scenarios where the semantic target
diverges from the visually dominant subject.

Second, \textbf{SparseVLM} exhibits noticeable performance degradation under
aggressive compression. Although it incorporates textual cues, its accuracy on
VQAv2 is \textbf{65.0\%} with 128 tokens---\textbf{8 points lower} than
FlashVLM. When the budget is reduced to 32 tokens, this gap further expands
(\textbf{35.3\% vs. 58.0\%}). This is mainly because its implicit attention maps
become increasingly noisy and spatially biased in deeper layers, causing the
limited retained tokens to insufficiently cover the true semantic regions.

In contrast, \textbf{FlashVLM} demonstrates markedly stronger robustness.
Powered by explicit cross-modal similarity estimation, it directly aligns
visual tokens with the textual query in the feature space. Even under a
32-token constraint, its VQAv2 accuracy (\textbf{58.0\%}) surpasses that of
VisPruner under a much larger 128-token budget (\textbf{42.0\%}). This highlights
the critical importance of explicit semantic guidance in maintaining performance
when the query targets non-salient or visually understated objects.

\end{document}